\documentclass{article}

 \usepackage[preprint]{neurips_2026}

\usepackage[utf8]{inputenc} % allow utf-8 input
\usepackage[T1]{fontenc}    % use 8-bit T1 fonts
\usepackage{hyperref}       % hyperlinks
\usepackage{url}            % simple URL typesetting
\usepackage{booktabs}       % professional-quality tables
\usepackage{amsfonts}       % blackboard math symbols
\usepackage{nicefrac}       % compact symbols for 1/2, etc.
\usepackage{microtype}      % microtypography

\usepackage{multirow}
\usepackage{makecell}
\usepackage[table]{xcolor}
\usepackage{graphicx}
\usepackage{caption}

\usepackage{lipsum}
\usepackage{graphicx}
\usepackage{wrapfig}
\usepackage{algorithm}
\usepackage{algpseudocode}
\usepackage{thmtools,thm-restate}
\usepackage{amsthm, amssymb}
\usepackage{colortbl}

\usepackage{amsmath,amsfonts,bm}

\def\eqref#1{equation~\ref{#1}}
\def\1{\bm{1}}

\def\eps{{\epsilon}}

\def\rmI{{\mathbf{I}}}

\def\veps{{\bm{\eps}}}

\def\vc{{\bm{c}}}

\def\vu{{\bm{u}}}

\def\vw{{\bm{w}}}
\def\vx{{\bm{x}}}

\def\vz{{\bm{z}}}

\DeclareMathAlphabet{\mathsfit}{\encodingdefault}{\sfdefault}{m}{sl}
\SetMathAlphabet{\mathsfit}{bold}{\encodingdefault}{\sfdefault}{bx}{n}

\def\gN{{\mathcal{N}}}

\def\sR{{\mathbb{R}}}

\def\eqref#1{Eq.~(\ref{#1})}
\usepackage{enumitem}
\setlist[itemize]{leftmargin=1.2em, itemsep=0pt, topsep=2pt}
\usepackage{caption}
\usepackage{titlesec}
\titlespacing*{\section}{0pt}{8pt plus 2pt minus 2pt}{4pt}
\titlespacing*{\subsection}{0pt}{6pt plus 2pt minus 2pt}{3pt}

\title{Gradient-Free Noise Optimization for Reward Alignment in Generative Models}

\author{%
  Jeongsol Kim$^{1*}$, Hongeun Kim$^{1*}$, Jian Wang$^2$, Jong Chul Ye$^1$\\
  KAIST AI$^1$, Snap Inc.$^2$\\
  \texttt{\{jeongsol, hongeun\}@kaist.ac.kr, jwang4@snap.com, jong.ye@kaist.ac.kr}\\
  \text{* Equal contribution}\\
}

\begin{document}

\maketitle

\begin{figure}[!h]
    \centering
    \includegraphics[width=1.0\linewidth,height=8.3cm]{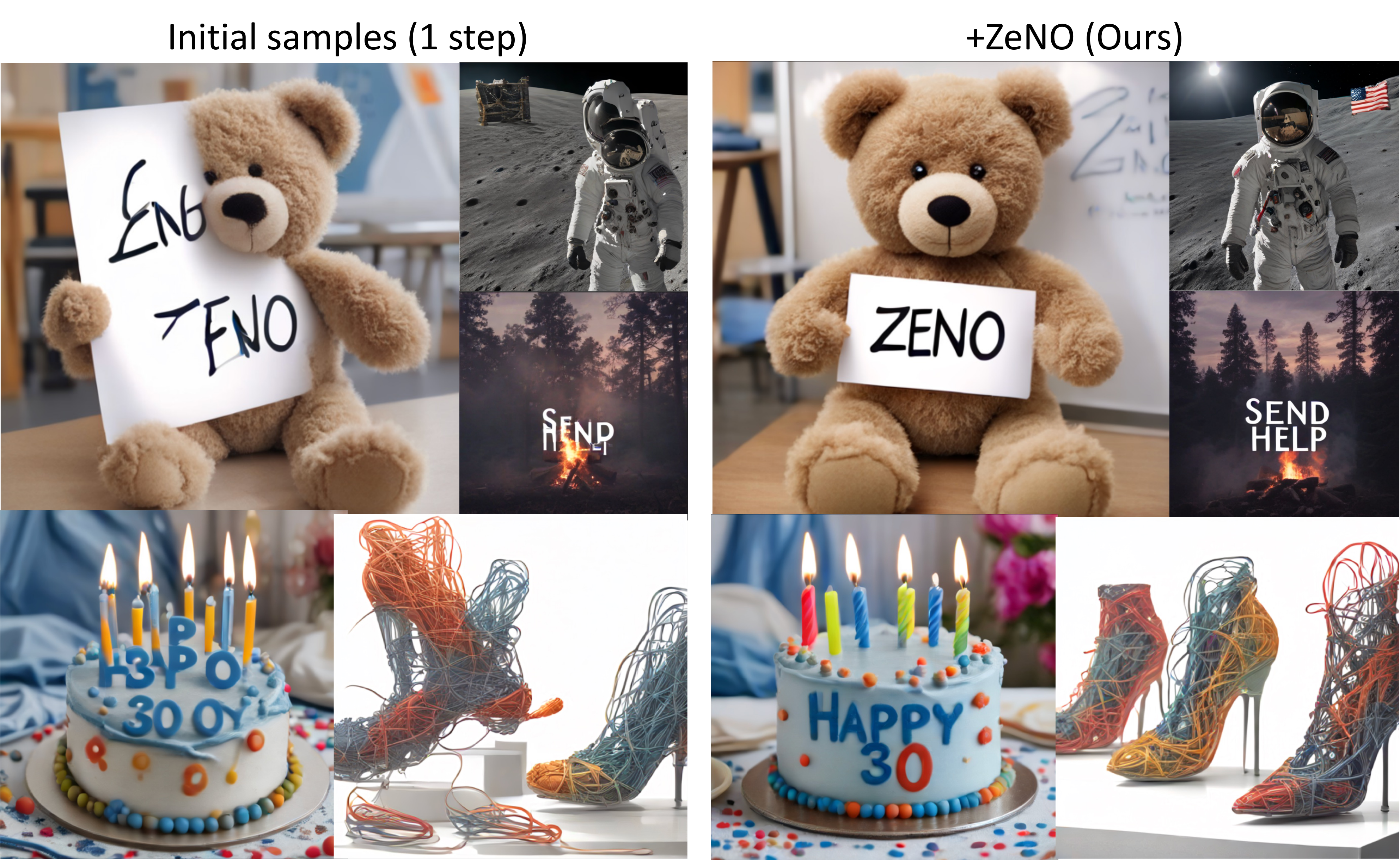}
    \caption{ Representative results of ZeNO across diverse one-step generators and reward functions. Our noise optimization consistently improves image quality and prompt alignment without generator modification or backpropagation. Generator specification and the corresponding text prompts are reported in Appendix~\ref{sec:detail}}.
    \label{fig:main}
\end{figure}

\begin{abstract}
% Existing inference-time guidance methods for diffusion and flow models rely on multi-step stochastic trajectories, making them difficult to extend to deterministic generators. 
Existing reward alignment methods for diffusion and flow models rely on multi-step stochastic trajectories, making them difficult to extend to deterministic generators. 
A natural alternative is noise-space optimization, but existing approaches require backpropagation through the generator and reward pipeline, limiting applicability to differentiable settings. To address this, here we present ZeNO (\textbf{Ze}roth-order \textbf{No}ise \textbf{O}ptimization), a gradient-free framework that formulates noise optimization as a path-integral control problem, estimable from zeroth-order reward evaluations alone. When instantiated with an Ornstein--Uhlenbeck reference process, the update connects to Langevin dynamics implicitly targeting a reward-tilted distribution. ZeNO enables effective inference-time scaling and demonstrates strong performance across diverse generators and reward functions, including a protein structure generation task where backpropagation is infeasible.
\end{abstract}

\section{Introduction}

Post-training aims to align a pre-trained generative model with desired objectives, often specified by a reward function such as prompt fidelity, aesthetic quality, 
human preference, etc.
A broad class of recent post-training methods improves diffusion or flow models by optimizing them toward such rewards while keeping them close to a reference model.
For example, DDPO~\cite{black2024training}, Flow-GRPO~\cite{liu2026flowgrpo}, PCPO~\cite{lee2026pcpo}, and SQDF~\cite{kang2026diffusion} formulate alignment from a sequential Markov decision process perspective, whereas VGG-Flow~\cite{liu2026value} and Residual $\nabla$-DB~\cite{liu2025efficient} are motivated by optimal control and GFlowNet-style principles.

Despite their different motivations, these methods are primarily developed for settings in which generation is represented as a multi-step stochastic trajectory.
This trajectory-level structure is central to their formulation, and consequently, extending these approaches to deterministic or one-step generators is not straightforward, since such generators do not naturally expose the same stochastic trajectory structure.
Recent advances in generative modeling have led to the development of few-step or one-step deterministic generators, such as consistency models~\cite{song2023consistency, luo2023latent} and distilled diffusion models~\cite{yin2024one, yin2024improved, sauer2024adversarial, geng2025mean}, which achieve high-quality generation at a fraction of the computational cost of their multi-step counterparts~\cite{yin2024one}.
%
% JS: Abstract - inference-time guidance. should we add this context?
%
Aligning these models with reward functions is therefore of growing practical importance, yet existing post-training methods are not directly applicable to this.

A natural alternative is to optimize the input noise of a deterministic generator directly \cite{eyring2024reno,min2026origen}.
Given a generator $G_\phi$ and a reward function $r$, one could backpropagate the terminal reward through the generator and perform gradient ascent in the noise space.
However, this approach requires differentiating through the generator and reward pipeline, and can be unreliable when gradients reflect spurious local sensitivities rather than meaningful semantic improvement.
Moreover, it is inapplicable when the reward is non-differentiable or when the generator is too large to support efficient backpropagation.
%
%A closely related approaches, 
For example, ReNO~\cite{eyring2024reno} and ORIGEN~\cite{min2026origen} operate in the noise space and connects noise optimization to gradient ascent or Langevin dynamics targeting a reward-tilted distribution, but
 still require backpropagation through the reward-generator composition, limiting its applicability to differentiable pipelines.

In this paper, we retain the noise-space perspective but replace direct gradient ascent with a zeroth-order stochastic control formulation, which we call \textbf{ZeNO} (\textbf{Ze}roth-order \textbf{No}ise \textbf{O}ptimization).
Specifically, we treat noise evolution as a controllable process and optimize it via a path-integral control (PIC) formulation, using the final sample reward as the objective, which yields a practical zeroth-order estimator.
In particular, we adopt the Ornstein-Uhlenbeck (OU) process as the reference dynamics, which provides initial noises compatible with the pre-trained generator.
The PIC formulation situates this estimator within a controlled SDE framework, connecting the noise update to Langevin sampling targeting the reward-tilted distribution $p^\star(\vz) \propto p(\vz)\exp(r(G_\theta(\vz))/\lambda)$ without requiring gradients through the generator or reward model.
We further introduce a variance reduction approximation that achieves significantly higher reward improvement under the same compute budget.

Empirically, ZeNO achieves competitive or superior performance compared to gradient-based methods across a range of generative models and reward functions, including non-differentiable reward, and enables effective inference-time scaling as more computation is allocated.
We further validate the broad applicability of ZeNO on a protein structure generation task, where the reward model is too large to support efficient backpropagation, which is a setting where gradient-based methods are inapplicable, demonstrating strong performance even under this challenging constraint.

\section{Background}

\textbf{Stochastic Differential Equation.}
For a stochastic process $\vx_t$, a stochastic differential equation (SDE) is written as 
\begin{equation}
    d\vx_t = f(\vx_t,t) dt + g(t) d\vw_t
\end{equation}
where $f(\vx_t, t)$ and $g(t)$ denote the drift and diffusion coefficients, and $\vw_t$ denotes a Wiener process.
A representative example is the Ornstein--Uhlenbeck (OU) process, obtained when $f(\vx_t, t) = -a\vx_t$ and $g(t)=b$ for $a>0$ and $b>0$.
This process has the Gaussian stationary distribution $\gN(0, b^2/2a)$ \cite{oksendal2003stochastic}.
In particular, when $2a=b^2$, the stationary distribution becomes $\gN(0, \rmI)$.

\textbf{Reward Tilted Distribution.}
In post-training generative models to maximize rewards, the alignment problem is often formulated as an entropy-regularized reinforcement learning objective over the generative trajectory.
Let $P^{\mathrm{ref}}$ denote the trajectory distribution induced by the pretrained generative model, and let $Q^\theta$ denote the trajectory distribution induced by 
the finetuned model parameterized by $\theta$.
The objective can then be written as
\begin{equation}
\label{eq:softrl}
    \mathcal{L}(\theta) = \mathbb{E}_{\vx_0 \sim q_0^\theta} \left[ r(\vx_0) \right]-\lambda D_{\mathrm{KL}} \left( Q^\theta \,\|\, P^{\mathrm{ref}} \right),
\end{equation}
where the first term encourages the model to generate high-reward samples, while the KL regularization term penalizes excessive deviation from the pretrained model.
The coefficient $\lambda>0$ controls the trade-off between reward maximization and staying close to the reference generative process.
Equivalently, this KL term can often be decomposed into a sum or integral of per-step policy divergences along the sampling  trajectory, depending on the specific generative process.
The  marginal distribution induced by the optimal policy of \eqref{eq:softrl} is given by  ~\cite{uehara2024fine}
\begin{equation}
    p_t^\star(\vx_t) \propto p_t^{\mathrm{ref}}(\vx_t) e^{V(\vx_t)},\quad
    V(\vx_t):=\log  \mathbb{E}_{P^{\mathrm{ref}}} \left[ \exp\left(\frac{r(\vx_0)}{\lambda}\right) \middle| \vx_t \right],
\end{equation}
for $t\in[0,1]$, where $\vx_0$ refers to the clean sample.
This shows that the optimal marginal is obtained by tilting the reference marginal according to the conditional desirability of future high-reward samples.
Notably, this tilted form also applies at $t=1$, corresponding to the initial noise distribution, which is particularly relevant to settings where optimization is performed directly in the initial noise space.

\textbf{Path Integral Control.}
Let $\vu(\vx_t, t)$ denote a control input that guides the reference SDE 
\begin{equation}
\label{eq:control_noise_sde}
d\vx_t = f(\vx_t, t)dt + g(t)\left( \vu(\vx_t,t)dt + d\vw_t \right).
\end{equation}
We seek a control that steers the trajectory toward high-reward terminal states while regularizing control effort.
To this end, we consider the terminal-cost optimal control objective
\begin{equation}
\label{eq:optimal_control_cost}
J_\vu (\vx_t, t) = m(\vx_t) + \int_t^T \frac{1}{2} \| \vu(\vx_\tau, \tau) \|^2 d\tau,
\end{equation}
where $m(\vx_t)$ denotes the terminal cost and the running cost term penalizes the control energy.
%For  reward maximization, we define $m(X_T) := -r(X_T)$.
%
Since the objective contains no intermediate-state cost, the optimal policy is driven solely by the terminal desirability of the trajectory subject to minimum 
control effort.
Under this formulation, the optimal control has the path-integral form~\cite{pathintegral,pmlr-v235-huang24g, Uehara2024understanding}.
\begin{align}
\label{eq:optimal_control}
\vu^\star(\vx_t, t) = \frac{\mathbb{E}_{\vx_{t:T}} \left[ \exp\left( -m(\vx_t) /\lambda\right) d\vw_t \right]}{\mathbb{E}_{\vx_{t:T}} \left[ \exp\left(-m(\vx_T)/\lambda\right)\right]},
\end{align}
where the expectation is taken over trajectories of the uncontrolled reference SDE in \eqref{eq:control_noise_sde}.
This expression shows that the control can be estimated from reward-weighted trajectory samples and depends only on terminal-cost evaluations, without requiring differentiation through the reward or the generator.
%
% The detailed derivation is given in the Appendix~\ref{sec:pic}.

A practical difficulty is that evaluating $\vu^\star$ over the full horizon to $T$ requires repeated rollout of \eqref{eq:control_noise_sde}, which becomes costly 
in long-horizon settings.
A standard approximation is to truncate the horizon to $H < T$ and replace $m(\vx_t)$ with a surrogate terminal cost evaluated after a shorter rollout.
Although this reduces computation, it is valid only when the intermediate state $X_{t+H}$ remains in the domain on which $m$ is defined.
In diffusion models, however, $X_{t+H}$ is generally a noisy latent state, whereas reward models are typically defined on clean samples.
This mismatch makes naive truncation inappropriate, and prior approaches therefore rely on Tweedie-type estimators to map noisy intermediate states to an estimate 
of the underlying clean sample~\cite{chung2023diffusion,Kim_2025_ICCV, kim2025testtime}.

In contrast, ZeNO avoids this entirely by operating directly in the initial noise space, where generator maps noise to clean samples in a single deterministic step, ensuring terminal cost is always evaluated on a valid clean sample.

\section{Method}
%

% We should answer "what is benifit of using noise space".
% 1) fine-tuning of deterministic generators
% 2) when solving problems, we can use good properties e.g. Gaussian
% 3) LeWorldModel 에서 얻을 수 있는 insight는 없을까.

% Latent space control as an alternative of model fine-tuning
% We shift alignment from parameter space to Gaussian latent space, enabling model-agnostic and training-free control.

We aim to align a deterministic generator with a reward function, particularly when the reward model is non-differentiable.
Unlike diffusion post-training methods that rely on policy gradients and multi-step stochastic trajectories~\cite{black2024training,liu2025efficient,liu2026value,liu2026flowgrpo}, we focus on settings where one-step deterministic generators do not naturally fit into a policy-learning framework.
Since the output of even a deterministic generator is fully determined by its initial noise, we instead formulate alignment as an optimization problem in the input noise space.
Specifically, we formulate noise optimization as a stochastic control problem, whose optimal update can be estimated without gradient information.
%
%A natural alternative is to directly optimize the input noise using reward gradients, but this requires differentiating through the generator and reward pipeline and can be unreliable when the gradients reflect spurious local sensitivities rather than meaningful semantic improvement.
%%
%To avoid this limitation, 

\begin{figure}[!t]
    \centering
    \vspace{-8mm}
    \includegraphics[width=0.9\linewidth]{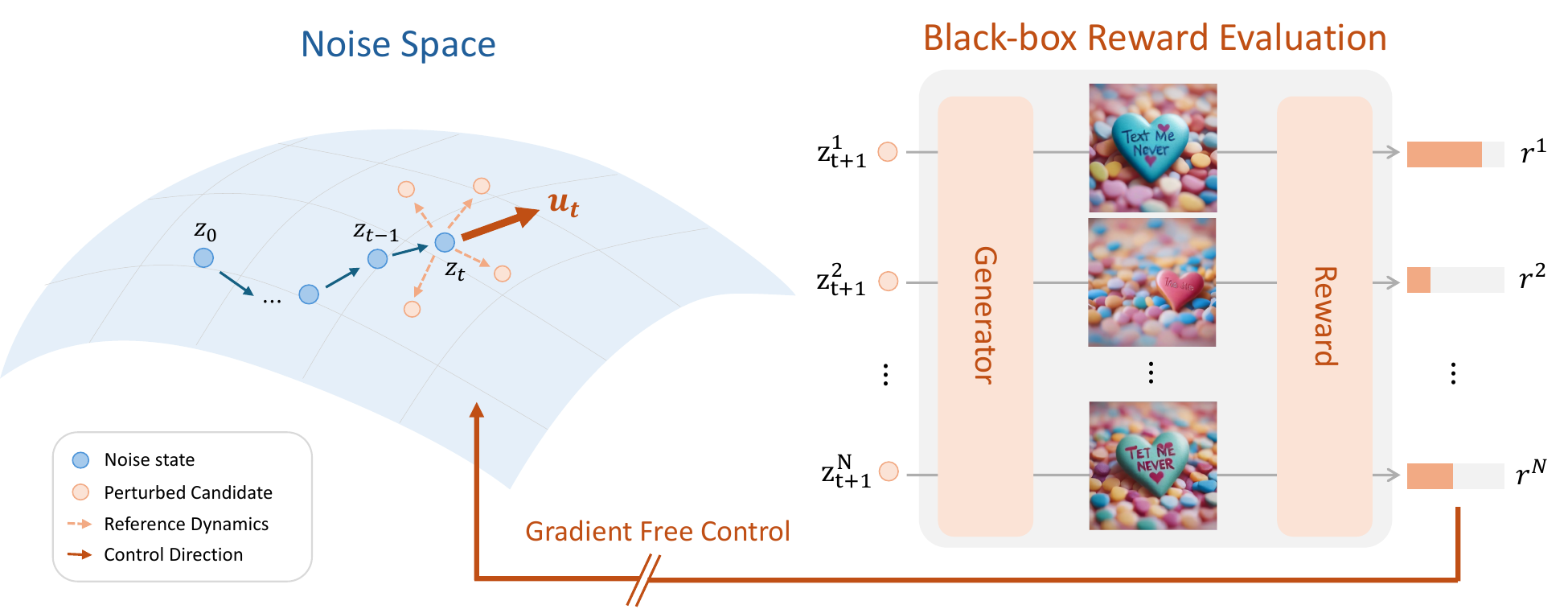}
    \caption{Overview of the ZeNO framework. At each iteration, $N$ perturbed noise candidates $\vz_{t+1}^{1:N}$ are sampled from the reference dynamics around the current noise state $\vz_t$. Each candidate is passed through the deterministic generator $G_\theta$ to produce an image, whose reward is evaluated by the reward model $R$. The resulting reward-weighted perturbations are aggregated to estimate the control direction $\vu_t$, which steers the noise trajectory toward higher-reward regions without requiring gradients through either the generator or the reward model.}
    \label{fig:method}
\end{figure}

\subsection{Problem definition}
Let $\vz_0 \sim \mathcal{N}(0,\mathbf{I})$ denote the initial noise of a deterministic generator, and let $G_\theta: \mathbb{R}^d \times \mathcal{C} \rightarrow \mathbb{R}^d$
denote the generative process, where $\mathcal{C}$ is the set of conditions such as text embeddings.
Given a condition $\vc \in \mathcal{C}$ and a reward function $r:\mathbb{R}^d \rightarrow \mathbb{R}$, our goal is to construct an updated noise sample $\vz_T$ such that $G_\theta(\vz_T,\vc)$ achieves a high reward.
For notational simplicity, we henceforth write $G_\theta(\vz,\vc)$ as $G_\theta(\vz)$.

To achieve this, we model noise optimization as a stochastic update process in the input noise space.
Specifically, starting from $\vz_0$, we construct a noise trajectory
$\{\vz_t\}_{t=0}^T$ whose terminal state $\vz_T$ is used as the input to the deterministic generator.
The goal is to steer this trajectory toward regions that produce high-reward samples under $G_\theta$.
We formalize this by introducing a controlled SDE for the noise update process and defining the terminal cost as
\begin{equation}
\label{eq:terminal_cost}
    m(\vz_T) := - r(G_\theta(\vz_T)).
\end{equation}
Solving the resulting path-integral control problem yields a gradient-free update rule guiding the noise trajectory toward high-reward regions while regularizing through the reference dynamics.

\subsection{Derivation} %Closed-form Approximation. }
When we solve the path-integral control problem and obtain a control vector $\vu(\vz,t)$, the initial noise is updated by the controlled SDE:
\begin{equation}
\label{eq:noise_sde}
    d\vz = f(\vz,t)dt + g(t)\left(\vu(\vz,t)dt + d\vw_t\right),
\end{equation}
where $f(\vz,t)$ and $g(t)$ denote the drift and diffusion coefficients, respectively, $d\vw_t$ is a Wiener process, and $\vu(\vz,t)$ is the control vector.
In the absence of control, i.e., $\vu(\vz,t)=0$, this reduces to the uncontrolled reference dynamics.
Given the terminal cost \eqref{eq:terminal_cost}, the optimal control becomes:
\begin{equation}
\label{eq:control_exp}
\vu^\star(\vz,t) = \frac{ \mathbb{E}_{\vz_{t:T}} \left[\exp\left(r(G_\theta(\vz_T))/\lambda\right)d\vw_t\right]}{\mathbb{E}_{\vz_{t:T}}\left[\exp\left(r(G_\theta(\vz_T))/\lambda\right)\right]},
\end{equation}
where the expectation is over trajectories induced by the uncontrolled reference dynamics.

The path-integral control suffers from high variance due to the exponential weighting and stochastic dynamics.
A standard remedy is to subtract a baseline from the reward before exponential weighting. With the mean reward as baseline and under the assumption of small reward variance, the 
exponential term can be linearized, yielding the following estimator (proof in Appendix~\ref{sec:proof_prop1}).

\begin{restatable}[]{prop}{linear}
\label{prop:linear}
If the variance of the reward $r(G_\theta(\vz_{T}))$ is small, the optimal control admits the following approximation:
\begin{align}
\label{eq:control_linear}
\vu^\star(\vz_t,t) \approx \mathbb{E} \left[ \left(r(G_\theta(\vz_{T}))-\bar{r}\right)d\vw_t \right] / \lambda, \quad \bar{r}=\mathbb{E}[r(G_\theta(\vz_{T}))] 
\end{align}
%
%where $\bar{r}=\mathbb{E}[r(G_\theta(\vz_{t+1}))]$. 
\end{restatable}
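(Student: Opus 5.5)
We start from the path-integral expression \eqref{eq:control_exp} and use the fact that it is invariant under a constant shift of the reward. Write $R := r(G_\theta(\vz_T))$ and let $\bar r = \mathbb{E}[R]$, where the expectation is over trajectories of the uncontrolled reference dynamics started at $\vz_t$. Multiplying numerator and denominator of \eqref{eq:control_exp} by $\exp(-\bar r/\lambda)$ gives
\begin{equation*}
\vu^\star(\vz_t,t) = \frac{\mathbb{E}\left[\exp\left((R-\bar r)/\lambda\right)d\vw_t\right]}{\mathbb{E}\left[\exp\left((R-\bar r)/\lambda\right)\right]},
\end{equation*}
which is exact. The baseline therefore costs nothing, and all the approximation enters in the next step.

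Next we linearize the exponential. The hypothesis that $\mathrm{Var}(R)$ is small, measured relative to $\lambda^2$, means $\delta := (R-\bar r)/\lambda$ is typically small. We then use $e^{\delta} = 1 + \delta + O(\delta^2)$.

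\textbf{Denominator.} It becomes $1 + \mathbb{E}[\delta] + O(\mathbb{E}[\delta^2]) = 1 + O(\mathrm{Var}(R)/\lambda^2)$, because $\mathbb{E}[\delta]=0$ by the choice of baseline. This is exactly where centering at $\bar r$ is needed: it removes the first-order term from the denominator.

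\textbf{Numerator.} It becomes $\mathbb{E}[d\vw_t] + \mathbb{E}[\delta\, d\vw_t] + O(\cdot)$. The first term vanishes because Wiener increments under the reference dynamics have zero mean.

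Dividing, we obtain $\vu^\star \approx \mathbb{E}[(R-\bar r)\,d\vw_t]/\lambda$, which is the claimed form \eqref{eq:control_linear}.

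The main obstacle is controlling the remainder in the numerator, $\mathbb{E}[O(\delta^2)\,d\vw_t]$. It is not obviously negligible relative to the leading term, since both involve $d\vw_t$, of order $\sqrt{dt}$. I would bound it by Cauchy--Schwarz:
\begin{equation*}
\bigl|\mathbb{E}[\delta^2 e^{\xi}\, d\vw_t]\bigr| \le \bigl(\mathbb{E}[\delta^4 e^{2\xi}]\bigr)^{1/2}\bigl(\mathbb{E}\|d\vw_t\|^2\bigr)^{1/2},
\end{equation*}
where $\xi$ lies between $0$ and $\delta$ (Lagrange form of the Taylor remainder). This shows the correction is higher order in the reward fluctuation, provided the reward is bounded or has controlled fourth moments; I would add that as an implicit assumption behind ``small variance''. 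The denominator correction is handled the same way: $1/(1+\epsilon) = 1 - \epsilon + O(\epsilon^2)$ multiplies a numerator that is already first order in $\delta$, so its contribution is also higher order. Since the statement only claims an approximation, I would present these bounds as a first-order expansion in $\mathrm{Std}(R)/\lambda$ rather than a fully rigorous error estimate.
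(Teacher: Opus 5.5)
Your proposal is correct and is essentially the paper's proof: multiplying through by $\exp(-\bar r/\lambda)$ is the paper's factorization $\exp(r_T/\lambda)=\exp(\bar r/\lambda)\exp((r_T-\bar r)/\lambda)$, and you then use the same first-order Taylor expansion, with $\mathbb{E}[r_T-\bar r]=0$ removing the first-order term in the denominator and $\mathbb{E}[d\vw_t]=0$ removing the zeroth-order term in the numerator. Your Cauchy--Schwarz bound on the Lagrange remainder is extra rigor the paper omits (it simply drops the higher-order terms); it bounds the remainder only in absolute size, so it establishes a genuine first-order expansion in $\mathrm{Std}(R)/\lambda$ in the scaling sense $R=\bar r+\epsilon\tilde R$, $\epsilon\to 0$, provided $\mathbb{E}[\tilde R\,d\vw_t]\neq 0$.
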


\begin{wrapfigure}{r}{0.5\textwidth}
\vspace{-0.5cm}
\begin{minipage}{0.5\textwidth}
\begin{algorithm}[H]
\caption{Algorithm of ZeNO %Zeroth-order Noise Optimization
}\label{alg:method}
\begin{algorithmic}[1]
\Require Generative process $G_\phi$, Reward $r$, Initial noise $\vz$, SDE variance $\beta$, Control step size $\eta$, Number of particles $N$, Number of iterations $M$
\State $\vz_0 \gets \vz$
\For{$m: 0\rightarrow M-1$}
    \For{$n: 0\rightarrow N-1$} \Comment{Can be done in parallel}
        \State $\vz_m^{(n)} \gets \sqrt{1-\beta}\vz_m + \sqrt{\beta}\veps^{(n)}$
        \State $r_m^{(n)} \gets r(G_\phi(\vz_m^{(n)}))$
    \EndFor
    \State $\bar r_m \gets \sum_n r_m^{(n)} / N$
    \State $\vu_m \gets \sum_n [(r_m^{(n)}-\bar r_m)\veps^{(n)}]/N$
    \State $\vz_{m+1}\gets \sqrt{1-\beta}\vz_m + \sqrt{\beta}\veps + \eta \vu_m$
    \State $\vz_{m+1}\gets \vz_{m+1} (\sqrt{d}/\|\vz_{m+1}\|)$
\EndFor
\State $\vx \gets G_\phi(\vz_M)$
\end{algorithmic}
\end{algorithm}
\end{minipage}
% \vspace{0mm}
\end{wrapfigure}

The resulting estimator emphasizes perturbations that produce above-average reward while suppressing those that do not.
As result, we empirically find that the linearized estimator outperforms both exponential and centered exponential estimators (Section \ref{subsec:exp_img}).

Notably, this estimator coincides with score function estimator used in Natural Evolution Strategies~\cite{JMLR:v15:wierstra14a}, providing independent validation of its effectiveness.
The PIC formulation further situates this estimator within a controlled SDE framework, which enables the Langevin interpretation discussed next.

\textbf{Choice of Reference SDE. }
The reference dynamics determine how the noise evolves in the absence of control, and thus directly affect the compatibility of the updated noise $\vz_T$ with the pre-trained generator.
Among common choices, variance-exploding (VE) SDEs lead to uncontrolled growth in noise variance, making the resulting noise incompatible with pre-trained generative models.
We therefore adopt the Ornstein--Uhlenbeck (OU) process:
\begin{equation}
    f(\vz, t):= -\frac{a_t}{2}\vz,\quad g(t):= \sqrt{a_t}
\end{equation}
% s
which corresponds to a variance-preserving (VP) SDE with time-dependent variable $a_t$.
Then, the discretized controlled OU process is given by
\begin{equation}
\vz \gets \sqrt{1-\beta}\vz + \sqrt{\beta}\epsilon + \eta \vu^\star.
\end{equation}
where $\beta:=a_t \Delta t$ is treated as a constant discretization coefficient.
Here, the control step size $\eta$ absorbs the coefficients arising from the Euler--Maruyama discretization and the linear approximation introduced in Proposition~\ref{prop:linear}.
In the absence of control, the OU process preserves $\gN(0, \rmI)$ as its stationary distribution, providing a natural starting point that is compatible with the pre-trained generator, even after adding perturbations to noise.
In fact, this choice connects the noise update to Langevin sampling, as formalized below. This reveals that our method implicitly targets the reward-tilted distribution $p^\star$. The proof is provided in the Appendix~\ref{sec:proof_rem1}.
\begin{restatable}[]{prop}{langevin}
    \label{prop:langevin}
    When $\beta_t \ll 1$, the discretized controlled OU process connects a Langevin sampling, as formalized below:
    \begin{equation}
        \vz \gets \vz + \lambda \eta \nabla_{\vz} \log p^\star(\vz) + \sqrt{2\lambda \eta}\veps
    \end{equation}
    where $p^\star(\vz)\propto p(\vz)\exp{(V(\vz))}$ is the reward-tilted distribution, $p(\vz):=\gN(0, \rmI)$, and $\lambda \eta$ is the effective step size. 
\end{restatable}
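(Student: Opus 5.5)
We start from the discretized controlled OU update $\vz \gets \sqrt{1-\beta}\vz + \sqrt{\beta}\veps + \eta \vu^\star$. We expand both the deterministic part and the control term to first order in $\beta$ and match the result to a Langevin step on $\log p^\star$.

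\textbf{Reference part.} For $\beta \ll 1$ we have $\sqrt{1-\beta}\,\vz \approx \vz - \tfrac{\beta}{2}\vz$. Since $p(\vz)=\gN(0,\rmI)$, we can write $-\vz = \nabla_\vz \log p(\vz)$. The uncontrolled part is therefore an unadjusted Langevin step,
\begin{equation*}
\vz + \tfrac{\beta}{2}\nabla_\vz\log p(\vz) + \sqrt{\beta}\veps ,
\end{equation*}
targeting $p$. Its step size $\beta/2$ is consistent with noise scale $\sqrt{\beta}$ in the standard form $\vz + \tfrac{\delta}{2}\nabla\log p + \sqrt{\delta}\veps$.

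\textbf{Control part.} Next we identify $\vu^\star$ with a score. Under the reference dynamics over one step, the perturbed candidate is $\vz' = \sqrt{1-\beta}\vz + \sqrt{\beta}\veps$, so $d\vw_t$ corresponds to $\veps$, up to the factor $\sqrt{\beta}$ absorbed into $\eta$. Define
\begin{equation*}
h(\vz) := \exp(V(\vz)) = \mathbb{E}\bigl[\exp(r(G_\theta(\vz'))/\lambda)\,\big|\,\vz\bigr].
\end{equation*}
We apply Stein's lemma (Gaussian integration by parts) in $\veps$:
\begin{equation*}
\mathbb{E}\bigl[\exp(r/\lambda)\,\veps\bigr] = \sqrt{\beta}\,\mathbb{E}\bigl[\nabla_{\vz'} \exp(r(G_\theta(\vz'))/\lambda)\bigr].
\end{equation*}
We also have $\nabla_\vz h = \sqrt{1-\beta}\,\mathbb{E}[\nabla_{\vz'}\exp(r/\lambda)] \approx \mathbb{E}[\nabla_{\vz'}\exp(r/\lambda)]$ for small $\beta$. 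Hence the exponential form of the optimal control becomes
\begin{equation*}
\vu^\star \propto \frac{\nabla_\vz h(\vz)}{h(\vz)} = \nabla_\vz V(\vz).
\end{equation*}
The linearized form in Proposition~\ref{prop:linear} agrees with this to first order when the reward variance is small. There, $\exp(r/\lambda)\approx \exp(\bar r/\lambda)\bigl(1+(r-\bar r)/\lambda\bigr)$, so $\mathbb{E}[(r-\bar r)\veps]/\lambda \approx \nabla_\vz V$ up to the same scale. Stein's lemma again gives $\mathbb{E}[r\veps]/\lambda = \sqrt{\beta}\,\nabla\mathbb{E}[r]/\lambda$. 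Under small variance, $V \approx \mathbb{E}[r]/\lambda$ by a cumulant expansion. In the $\beta\ll1$ regime, $V(\vz)$ at $\vz'$ and at $\vz$ differ only negligibly, so we treat $V$ as a function of the current noise.

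\textbf{Matching coefficients.} Combining the two parts gives
\begin{equation*}
\vz \gets \vz + \tfrac{\beta}{2}\nabla\log p + \eta'\,\nabla V + \sqrt{\beta}\veps .
\end{equation*}
Here $\eta'$ collects $\eta$, the factor $\sqrt{\beta}$, and the $1/\lambda$ coefficient. Choosing the step size so that $\eta' = \beta/2 =: \lambda\eta$, in the sense that $\eta$ absorbs the discretization constants as stated in the text, the two drifts merge into
\begin{equation*}
\lambda\eta\,\nabla\bigl(\log p + V\bigr) = \lambda\eta\,\nabla_\vz\log p^\star .
\end{equation*}
The noise term is then $\sqrt{\beta}\veps = \sqrt{2\lambda\eta}\,\veps$, which is exactly the claimed Langevin update.

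\textbf{Main obstacle.} The delicate step is the consistent bookkeeping of scales. The drift from the OU part is $O(\beta)$, while the Stein identity produces $\vu^\star = O(\sqrt{\beta})\,\nabla V$. Both the $\lambda$ factor and the $\sqrt{\beta}$ factor must be absorbed into $\eta$ so that the coefficient relation $\text{drift} = \lambda\eta$, $\text{noise} = \sqrt{2\lambda\eta}$ holds. My plan is to state this as a choice of $\eta$, equivalently as the reparametrization already announced after the discretized OU update, rather than a derived identity. I also need to justify neglecting the $O(\beta)$ error terms: from $\sqrt{1-\beta}$ in the chain rule, and from replacing $V(\vz')$ by $V(\vz)$.
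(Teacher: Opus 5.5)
Your proposal is correct and shares the paper's skeleton: expand $\sqrt{1-\beta}\approx 1-\beta/2$, read $-\vz$ as $\nabla_\vz \log p(\vz)$, couple the parameters through $\beta = 2\lambda\eta$, and identify the control with $\nabla_\vz V$. The difference is in how you obtain that last identification.

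The paper's proof simply substitutes $\nabla_\vz V = \vu/\lambda$. It relies on the continuous-time gradient form $\vu^\star = g(t)\nabla_\vz \log \Psi$, asserted in the path-integral appendix via Feynman--Kac, with $g(t)$ and other scale factors absorbed into the step size.

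You instead derive the identification for the actual one-step transition $\vz' = \sqrt{1-\beta}\,\vz + \sqrt{\beta}\,\veps$ by Gaussian integration by parts. This gives the exact identity $\mathbb{E}[e^{r/\lambda}\veps]/\mathbb{E}[e^{r/\lambda}] = \sqrt{\beta/(1-\beta)}\,\nabla_\vz V$, with $V(\vz) = \log \mathbb{E}[e^{r(G_\theta(\vz'))/\lambda}\mid \vz]$. You also check that the linearized estimator of Proposition~\ref{prop:linear} targets the same score to first order.

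Your route buys three things:
\begin{itemize}
\item It is self-contained in discrete time.
\item It pins down which $V$ is being targeted, namely the one-step, Gaussian-smoothed value; the paper's proof leaves this implicit.
\item It exposes the $\sqrt{\beta}$ scale of the control that the paper absorbs. Matching the two drifts forces the algorithmic step to scale like $\sqrt{\beta}$; equivalently, for a fixed step it determines the temperature $\lambda$ from $\beta$ and $\eta$.
\end{itemize}
The paper's route is shorter but rests on an asserted equivalence with loosely tracked constants. Both treat the coefficient coupling as a choice rather than a derived fact, so presenting it that way, as you plan, is appropriate.

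One caution. Stein's lemma as you wrote it places $\nabla_{\vz'}$ on $\exp(r(G_\theta(\vz'))/\lambda)$, i.e.\ it assumes $r\circ G_\theta$ is (weakly) differentiable, which is exactly what the paper wants to avoid. You do not need this. Write $h(\vz) = \int f(\vz')\,\gN(\vz';\sqrt{1-\beta}\,\vz,\beta\rmI)\,d\vz'$ and differentiate the Gaussian kernel instead. This gives $\nabla_\vz h(\vz) = \frac{\sqrt{1-\beta}}{\sqrt{\beta}}\,\mathbb{E}[f(\vz')\,\veps]$ for any $f$ with mild integrability (e.g.\ bounded rewards), with no smoothness of $r\circ G_\theta$, and it yields the same identity.
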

\textbf{Relation to Existing Methods.}
The most closely related method is ORIGEN~\cite{min2026origen}, which also operates in the initial noise space and explicitly targets $p^\star$ through Langevin dynamics. However, ORIGEN requires backpropagation through the reward-generator composition to compute $\nabla_\vz \log p^\star$, making it dependent on differentiable rewards and reliable gradient flow. In contrast, our method arrives at an equivalent sampling objective through the PIC formulation, estimating the update direction entirely from zeroth-order reward evaluations. This makes our method applicable to any non-differentiable or black-box reward function.

\noindent
\textbf{One-step receding-horizon.}
Since we define a reference dynamics as a SDE on noise space, the correlation between $dw_t$ and $r(G_\theta(z_{t+H}))$ will decrease for long horizon ($H=T-t$). %as shown in the following.
\begin{restatable}[]{prop}{horizon}
Under the discretized OU process with $\beta \ll 1$, the 
influence of $\veps_t$ on $\vz_{t+H}$ decays as
\begin{equation}
    \frac{\partial \vz_{t+H}}{\partial \veps_t} 
    = \sqrt{\beta}(1-\beta)^{(H-1)/2} 
    \approx \sqrt{\beta}\, e^{-\beta(H-1)/2},
\end{equation}
showing that the control signal weakens exponentially with horizon $H$. 
\end{restatable}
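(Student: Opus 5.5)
We unroll the uncontrolled discretized OU recursion and differentiate the resulting closed form. Setting $\vu=0$, the reference dynamics on the noise space is $\vz_{s+1} = \sqrt{1-\beta}\,\vz_s + \sqrt{\beta}\,\veps_s$ for $s=t,t+1,\dots$. The expectation defining $\vu^\star$ in Proposition~\ref{prop:linear} is taken over these uncontrolled trajectories, so this is the correct object to study.

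By induction on $H\ge 1$ we show
\begin{equation}
\vz_{t+H} = (1-\beta)^{H/2}\vz_t + \sqrt{\beta}\sum_{k=0}^{H-1}(1-\beta)^{(H-1-k)/2}\veps_{t+k}.
\end{equation}
The base case $H=1$ is the recursion itself. For the inductive step, substitute the formula for $\vz_{t+H}$ into $\vz_{t+H+1}=\sqrt{1-\beta}\,\vz_{t+H}+\sqrt{\beta}\,\veps_{t+H}$. Every existing coefficient gains a factor $(1-\beta)^{1/2}$, and the new noise enters with coefficient $\sqrt{\beta}$, which is the $k=H$ term.

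Since $\vz_t$ and the other noises $\veps_{t+k}$, $k\ge 1$, do not depend on $\veps_t$, the map $\veps_t\mapsto \vz_{t+H}$ is affine. Its Jacobian is the $k=0$ coefficient times the identity, $\partial \vz_{t+H}/\partial\veps_t = \sqrt{\beta}(1-\beta)^{(H-1)/2}\,\rmI$. We read the statement's scalar as this multiple of the identity.

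For the asymptotic form, write $(1-\beta)^{(H-1)/2} = \exp\big(\tfrac{H-1}{2}\log(1-\beta)\big)$ and use $\log(1-\beta) = -\beta + O(\beta^2)$ for $\beta\ll 1$. This gives $\sqrt{\beta}\,e^{-\beta(H-1)/2}\,(1+O(H\beta^2))$. The relative error is controlled when $H\beta^2\ll 1$, so the approximation holds uniformly for horizons $H=o(\beta^{-2})$. For larger $H$, both sides are exponentially small anyway.

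To connect with the claim about the control signal, we linearize $r\circ G_\theta$ around the mean trajectory. Writing $\hat r := r(G_\theta(\vz_{t+H}))$, Stein's lemma gives $\mathbb{E}[(\hat r-\bar r)\,\veps_t] = \mathbb{E}[\nabla_{\veps_t}\hat r]$. The chain rule then gives $\nabla_{\veps_t}\hat r = \sqrt{\beta}(1-\beta)^{(H-1)/2}\,\nabla_{\vz}\, r(G_\theta(\vz_{t+H}))$, so the $\veps_t$-correlation is scaled by the decaying factor. Stein's lemma applies because $\veps_t$ is standard Gaussian, assuming $r\circ G_\theta$ is differentiable almost everywhere with integrable gradient, which holds in this interpretive step.

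This last step is the only real obstacle. The Jacobian identity is exact, but turning it into an honest statement that the control weakens needs this smoothness assumption. Without it, we would state the result only as the decay of the sensitivity $\partial\vz_{t+H}/\partial\veps_t$, which is what the proposition literally asserts.
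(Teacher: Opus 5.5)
Your proof is correct and is the paper's argument. You unroll $\vz_{s+1}=\sqrt{1-\beta}\,\vz_s+\sqrt{\beta}\,\veps_s$, read off the coefficient $\sqrt{\beta}(1-\beta)^{(H-1)/2}$ of $\veps_t$, and apply $\log(1-\beta)\approx-\beta$; your induction and the $O(H\beta^2)$ relative-error bound make the paper's ``$\cdots$'' and ``$\approx$'' precise.

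One caveat on your optional Stein's-lemma link, which the paper does not attempt: being differentiable almost everywhere with an integrable gradient is not enough for $\mathbb{E}[f(\veps)\veps]=\mathbb{E}[\nabla f(\veps)]$. A 0/1 success reward like GenEval's has zero gradient almost everywhere, yet it generally correlates with $\veps_t$. You need weak differentiability, e.g.\ $f$ Lipschitz. To conclude that the control itself decays, you also need a bound on $\mathbb{E}[\nabla(r\circ G_\theta)(\vz_{t+H})]$ that is uniform in $H$.
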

This motivates the one-step receding-horizon approximation ($H=1$), which preserves the strongest signal while avoiding the computational cost of full horizon rollout. Thus, we replace the terminal state $\vz_T$ with a one-step proposal $\vz_{t+1}$ sampled from the uncontrolled reference dynamics.

The proposed methods with OU process can be described as in the Algorithm~\ref{alg:method} and depicted in the Figure~\ref{fig:method}. At each iteration, $N$ particles are sampled from the one-step reference dynamics, their rewards are evaluated, and the linearized control estimate is used to update the current noise state. The reward centering by $\bar r$ stabilizes the estimator empirically, consistent with the small variance assumption in Proposition~\ref{prop:linear}.
Notably, the number of particles $N$ and the number of iterations $M$ serve as inference-time scaling knobs: increasing either allows the method to allocate more computation toward reward improvement without modifying the generator or reward model (see Section~\ref{subsec:exp_img}).

\begin{wraptable}{r}{0.44\textwidth}
\vspace{-3mm}
\centering
\small
\begin{tabular}{c|c|c}
\toprule
Method & Mode probability & KL $\downarrow$ \\
\midrule
Target & 0.139 / 0.673 / 0.188 & -- \\
Ours   & 0.140 / 0.700 / 0.160 & \textbf{0.0026} \\
Grad   & 0.412 / 0.384 / 0.204 & 0.2481 \\
\bottomrule
\end{tabular}
\caption{Comparison of empirical mode probabilities and KL divergence 
from the target reward-tilted distribution.\label{tab:toy_tilted}}
\vspace{-5mm}
\end{wraptable}

\noindent
\textbf{Practical design choice.}
Although the OU process theoretically preserves $\gN(0, \rmI)$ as its stationary distribution, the control term $\eta \vu$ can gradually shift the norm of the updated noise away from its expected value.
To correct for this, we re-normalize $\vz$ after each update so that its norm matches $\sqrt{d}$, the expected norm of a standard Gaussian vector in $\sR^d$. This ensures compatibility with the pre-trained generator is maintained throughout the optimization.

\section{Experiments}

\begin{figure}[t]
    \centering
    \includegraphics[width=\linewidth]{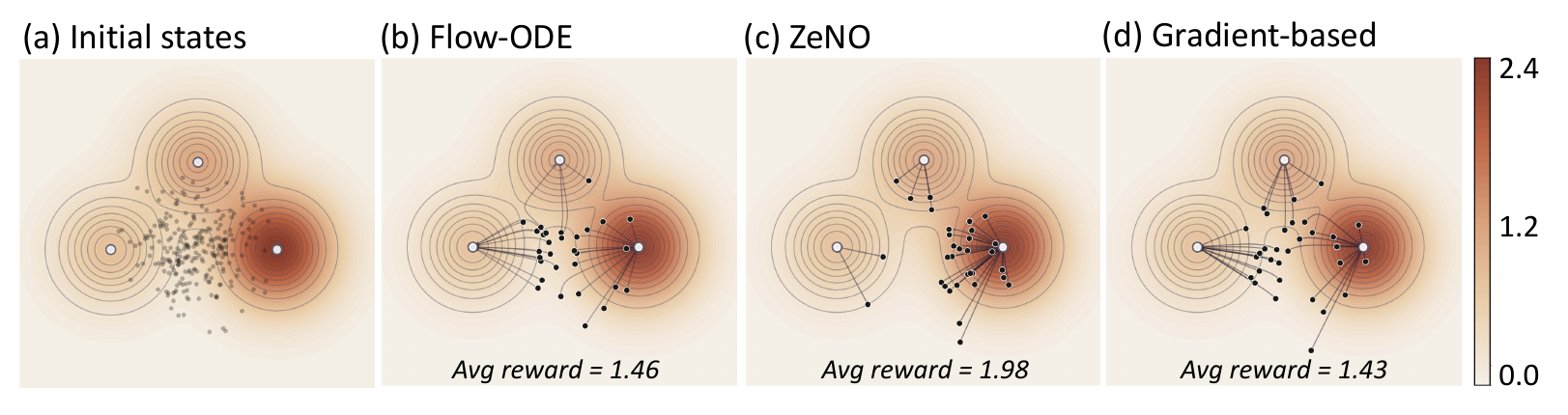}
    \vspace{-6mm}
    \caption{Toy experiment. (a) A 2D Gaussian mixture distribution with reward landscape (red: higher reward).Black dots denote initial noise samples. (b) Probability flow ODE trajectories from initial noises following the score field. (c) Trajectories from updated noises using zeroth-order path integral control (ours), consistently steering samples toward the global high-reward mode. (d) Trajectories from updated noises using first-order gradient-based updates, which tend to be guided toward nearby modes due to their local nature, yielding suboptimal reward.}
    \label{fig:toy}
    \vspace{-4mm}
\end{figure}

We evaluate ZeNO across a range of deterministic generators and reward functions to demonstrate its effectiveness, scalability, and broad applicability.
We begin with a two-dimensional toy experiment to illustrate the core behavior of the proposed method, followed by real generative model experiments.

\subsection{Toy Experiment}

We consider a Gaussian mixture model as the target data distribution and define a deterministic generative process $G_\phi$ via an ODE whose velocity field follows the score of this distribution.
Each mode is assigned a distinct reward value, inducing a non-uniform reward landscape over the data space, and the goal is to steer generated samples toward higher-reward modes. Figure~\ref{fig:toy}a illustrates the reward landscape, where brown colors indicate higher reward.
Without reward guidance, $G_\phi$ causes each initial noise to move toward its nearest mode (Figure~\ref{fig:toy}b).
When updated using ZeNO, the noise distribution shifts toward high-reward regions without collapsing to a single mode, reflecting the Langevin sampling implicitly induced by the control (Figure~\ref{fig:toy}c).
In contrast, first-order gradient updates yield a lower average reward than even the uncontrolled baseline (Figure~\ref{fig:toy}d)\footnote{Step sizes are set so that the norm of noise updates are matched to those of ZeNO; the gradient-based reward improves only marginally with larger step sizes, confirming the issue is directional.}, as local gradient information can guide noise toward nearby but suboptimal modes in flat or ambiguous regions of the reward landscape.
Table~\ref{tab:toy_tilted} further confirms these observations: ZeNO closely matches the target tilted distribution, whereas gradient-based updates deviate significantly, concentrating mass on a subset of modes.
While ZeNO is primarily motivated by non-differentiable or black-box reward settings, this result suggests an additional advantage over gradient-based methods in terms of robustness to local sensitivity issues.

\begin{figure}
    \centering
    \includegraphics[width=0.95\linewidth]{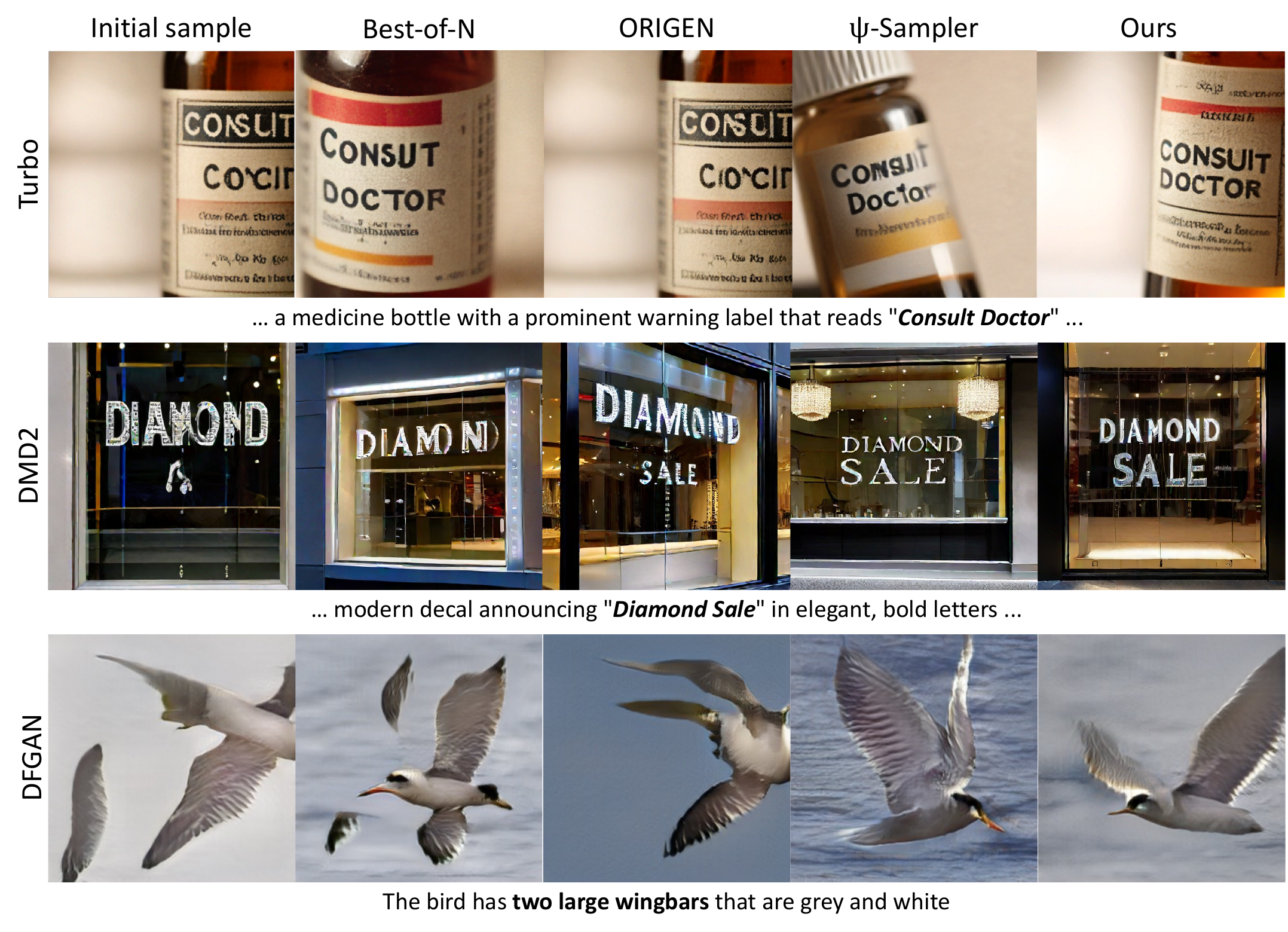}
    \caption{Qualitative comparison of ZeNO against baselines on SDXL-Turbo, DMD2, and DFGAN with PickScore (rows 1, 2) and aesthetic score (rows 3) as reward functions.}
    \label{fig:qual}
    \vspace{-8mm}
\end{figure}

\subsection{Image Generation Experiments}
\label{subsec:exp_img}
\noindent
\textbf{Generators and Rewards.}
We consider four generators covering distilled diffusion models, consistency models, and GANs.
DMD2 and SDXL-Turbo are distilled diffusion models based on SDXL, via distribution matching distillation~\cite{yin2024improved} and adversarial diffusion distillation~\cite{sauer2024adversarial, sauer2024fast}, respectively; we use their one-step variants.
LCM is a consistency model based on SDXL, enabling one-step generation via consistency distillation~\cite{song2023consistency, luo2023latent}.
DFGAN~\cite{tao2022df} is a GAN-based generator included to demonstrate that ZeNO is applicable beyond diffusion-based architectures.
For reward functions, we use aesthetic score~\cite{schuhmann2022aesthetic} and PickScore~\cite{kirstain2023pickapic} as differentiable rewards, and GenEval~\cite{ghosh2023geneval} as a non-differentiable reward based on compositional text-image alignment.
For DMD2, SDXL-Turbo, and LCM, we evaluate on aesthetic score using simple-animal prompts following~\cite{black2024training, clark2024directly} and on PickScore~\cite{kirstain2023pickapic} using 200 evaluation prompts.
For DFGAN, we use 500 CUB~\cite{wah_branson_welinder_perona_belongie_2011} evaluation prompts.

\noindent
\textbf{Baselines.}
We compare ZeNO against the following baselines.
\textbf{Best-of-$\mathbf{N}$} independently samples $N$ noise vectors and selects the one yielding the highest reward, serving as the simplest inference-time scaling baseline applicable to any reward function.
\textbf{ORIGEN}~\cite{min2026origen} formulates noise optimization as Langevin dynamics targeting the reward-tilted distribution, representing the strongest gradient-based baseline in the noise space optimization setting.
We note that ReNO~\cite{eyring2024reno} similarly applies gradient ascent in the initial noise space; as ORIGEN subsumes this approach within a more principled formulation, we use ORIGEN as the representative gradient-based baseline.
Both methods require differentiable rewards and reliable gradient flow through the generator.
\textbf{$\Psi$-Sampler}~\cite{yoon2026psisampler} applies pCNL-based Langevin dynamics in the initial noise space to sample from the reward-aware posterior.
We adapt $\Psi$-Sampler by applying the pCNL update directly in the initial noise space with the deterministic generators.
%
% For gradient-free comparisons (Best-of-$N$ and ZeNO), we match the total number of generator calls at $N \times M = 1600$.
%
For gradient-based methods (ORIGEN and $\Psi$-Sampler), we match the number of generator calls under the same budget; note that these methods additionally require backpropagation through the generator and reward, incurring additional computational overhead.

\noindent
\textbf{Hyperparameters. }
For all experiments, we use SDE variance $\beta=0.01$ and control step size $\eta=1.5$. For the main experiments, we run each algorithm with $N=16$ and $M=200$ to match the compute budget, and report the best reward among noise states. See Appendix~\ref{sec:beta_ablation} for ablation on $\beta$.

\begin{figure}[t]
\centering

% =========================================================
% Top row: two tables side by side
% =========================================================
\begin{minipage}[t]{0.58\textwidth}
\vspace{0pt}
\centering

\captionof{table}{
Results on differentiable rewards. We report aesthetic score and PickScore across four generators.
\textbf{Bold} indicates the best result, and \underline{underline} indicates the second-best result. $^\dagger$ Evaluated on 500 CUB prompts~\cite{wah_branson_welinder_perona_belongie_2011}.
}
\label{tab:differentiable_rewards}

\vspace{1mm}

\resizebox{\linewidth}{!}{
\begin{tabular}{lcc|cc|cc|cc}
\toprule
& \multicolumn{2}{c}{DMD2} 
& \multicolumn{2}{c}{SDXL-Turbo} 
& \multicolumn{2}{c}{LCM} 
& \multicolumn{2}{c}{DFGAN$^\dagger$} \\
\midrule
Method 
& Aes. & Pick. 
& Aes. & Pick. 
& Aes. & Pick. 
& Aes. & Pick. \\
\midrule
Uncontrolled        
& 4.42 & 20.49 & 5.43 & 21.65 & 4.59 & 18.53 & 4.26 & 19.48 \\
Best-of-$N$         
& 4.73 & 22.29 & 4.65 & 23.08 & 4.80 & 20.65 & \textbf{5.65} & \underline{21.33} \\
ORIGEN 
& 5.97 & 21.83 & \underline{5.93} & 21.50 & 4.45 & 18.83 & 5.34 & 20.98 \\
$\Psi$-Sampler 
& \underline{6.22} & \underline{22.56} & \textbf{6.00} & \underline{23.35} & \underline{5.90} & \underline{21.11} & 5.20 & 20.81 \\
ZeNO (ours)         
& \textbf{6.31} & \textbf{22.68} 
& \textbf{6.00} & \textbf{23.43} 
& \textbf{6.15} & \textbf{21.53} 
& \underline{5.63} & \textbf{21.34} \\
\bottomrule
\end{tabular}
}

\vspace{0.5mm}
% {\footnotesize }

\end{minipage}
\hfill
\begin{minipage}[t]{0.40\textwidth}
\vspace{0pt}
\centering

\captionof{table}{
Results on non-differentiable rewards. We report GenEval scores for count and position, where gradient-based methods are inapplicable.
}
\label{tab:nondifferentiable_rewards}

\vspace{1mm}

\resizebox{\linewidth}{!}{
\begin{tabular}{lcc|cc}
\toprule
& \multicolumn{2}{c}{DMD2} 
& \multicolumn{2}{c}{SDXL-Turbo} \\
\midrule
Method 
& Count & Pos. 
& Count & Pos. \\
\midrule
Uncontrolled        
& 0.163 & 0.090 & 0.513 & 0.250 \\
Best-of-$N$         
& \underline{0.838} & \textbf{0.840} 
& \underline{0.925} & \textbf{0.900} \\
ZeNO (ours)         
& \textbf{0.950} & \underline{0.760} 
& \textbf{0.963} & \underline{0.680} \\
\bottomrule
\end{tabular}
}

\end{minipage}

\vspace{3mm}

% =========================================================
% Bottom row: wide figure
% =========================================================
\begin{minipage}[t]{0.61\textwidth}
\vspace{0pt}
\centering
\includegraphics[width=\linewidth]{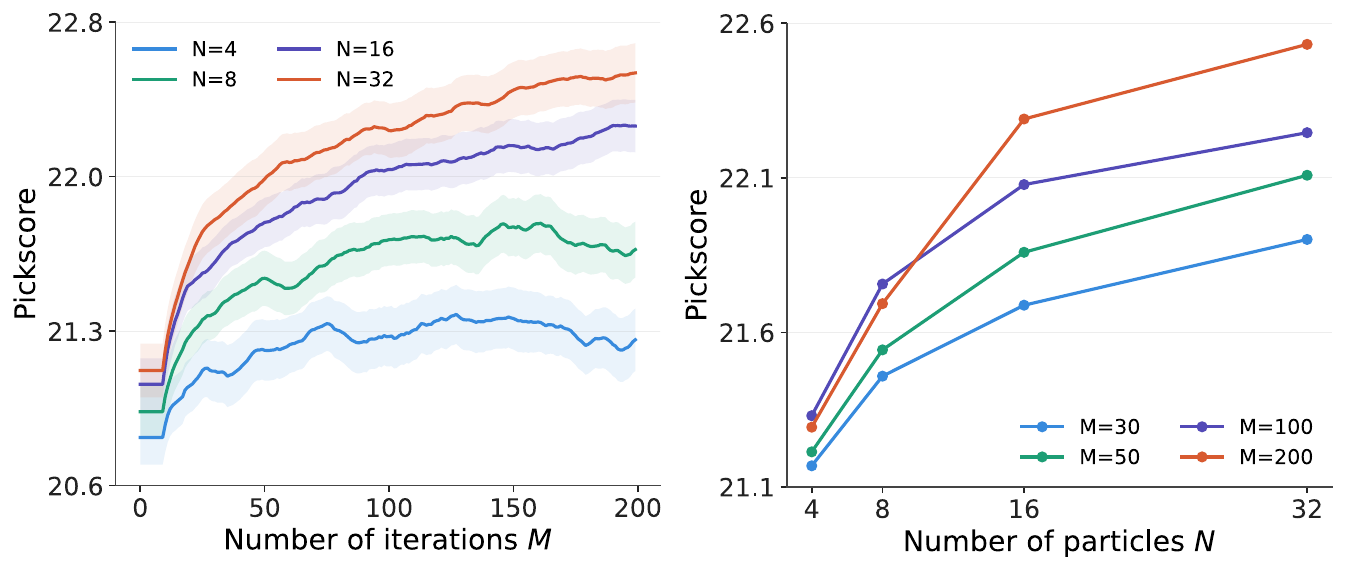}
\vspace{-4mm}
\captionof{figure}{Inference-time Scaling ablation for N and M}
\label{fig:scaling}
\end{minipage}
\hspace{0.01\textwidth}
\begin{minipage}[t]{0.34\textwidth}
\vspace{-4.5mm}
\centering
\includegraphics[width=\linewidth]{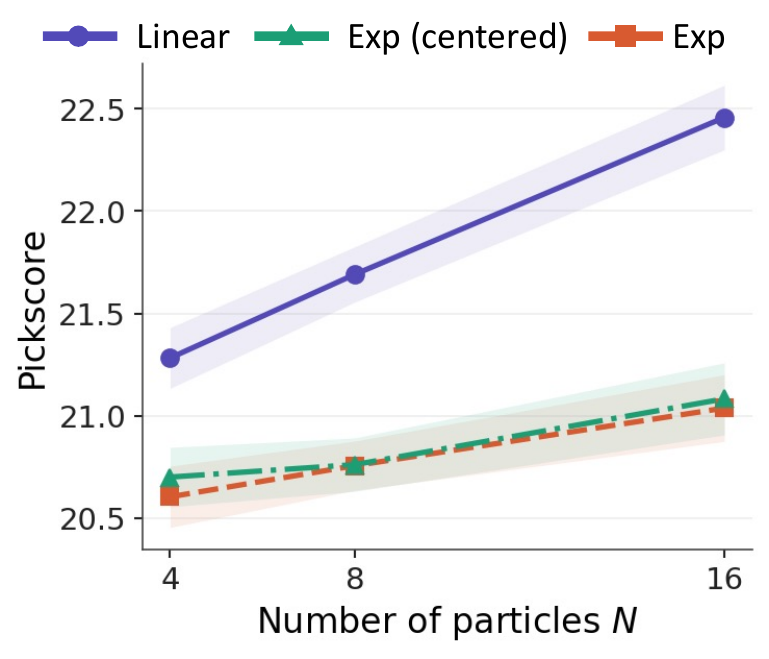}
\vspace{-5mm}
\captionof{figure}{Estimator comparison.}
\label{fig:estimator_comparison}
\end{minipage}
\vspace{-4mm}
\end{figure}

\textbf{Comparison with Baselines.}
Table~\ref{tab:differentiable_rewards} reports results on aesthetic score and PickScore across all generators.
ZeNO consistently improves reward across all generator and reward combinations.
Compared to ORIGEN and $\Psi$-Sampler, ZeNO achieves competitive or superior performance despite requiring no gradients through the generator or reward model, demonstrating that zeroth-order noise optimization is a viable alternative to gradient-based methods.
In particular, ORIGEN degrades below the uncontrolled baseline on LCM, consistent with our toy experiment findings that local gradient information can be unreliable, and further suggesting that this issue is exacerbated when backpropagating through complex generator-reward compositions.
Notably, DFGAN results confirm that ZeNO generalizes beyond diffusion-based architectures, supporting its generator-agnostic applicability.

% updated noise 중에서 best를 골랐다는 거 명시하기.
% \begin{wrapfigure}{r}{0.4\textwidth}
%     \centering
%     \vspace{-6mm}
%     \includegraphics[width=0.4\textwidth]{figs/estimator_comparison.pdf}
%     \caption{Comparison of control estimators as a function of the number of particles $N$. Mean PickScore with standard error across prompts is reported.}
%     \vspace{-4mm}
%     \label{fig:estimator}
% \end{wrapfigure}

Table~\ref{tab:nondifferentiable_rewards} reports results on GenEval, where the reward is computed by an object detection-based evaluation pipeline and is non-differentiable.
In this setting, gradient-based methods including ORIGEN and $\Psi$-Sampler are inapplicable, and we compare ZeNO against Best-of-$N$ under the same total generator call budget.
We evaluate on counting and position subsets of GenEval, where uncontrolled generation consistently fails, providing meaningful room for reward-guided improvement.
ZeNO consistently outperforms Best-of-$N$ on counting tasks across all generators, demonstrating that iterative noise optimization extracts more reward improvement from the same computational budget than independent sampling.
On position tasks, however, Best-of-$N$ is better in some settings. We attribute this to the initial point dependency inherent in our one-step VP SDE formulation: since $\vz_{t+1}$ is sampled in a single step from $\vz_t$, the optimization heavily depends on the initial noise state, which is particularly limiting for position tasks where small spatial differences lead to large reward changes.
Nevertheless, ZeNO outperforms Best-of-$N$ across other rewards and counting tasks, demonstrating broad effectiveness as an inference-time optimization framework.

\textbf{Inference-Time Scaling Analysis.}
%\label{subsec:scaling}
Figure~\ref{fig:scaling} shows Pickscore of DMD2 as a function of the number of update iterations $M$ (left) and the number of particles $N$ (right), confirming that both serve as effective inference-time scaling knobs.
Both sweeps confirm steady reward improvement: larger $N$ consistently achieves higher reward at any given $M$, and larger $M$ yields higher reward at any given $N$.
These results demonstrate that ZeNO enables flexible inference-time scaling, allowing a trade-off between the number of particles and iterations under a given compute budget.

\textbf{Estimator Comparison.}
%\label{subsec:estimator}
Figure~\ref{fig:estimator_comparison} compares three control estimators across the number of particles $N$: the linearized estimator adopted in ZeNO (Proposition~\ref{prop:linear}), exponential weighting, and exponential weighting with centered reward.
Across all estimators, reward improves as $N$ increases, consistent with the variance reduction effect of using more particles.
Centered exponential weighting partially mitigates the high-variance of standard exponential weighting, as reflected in its slightly higher reward at small $N$.
However, the linearized estimator substantially outperforms both exponential variants, with the gap widening as $N$ increases.

\subsection{Protein Structure Generation Experiments}

\begin{wrapfigure}{r}{0.45\textwidth}
    \centering
    \vspace{-10mm}
    \includegraphics[width=0.45\textwidth]{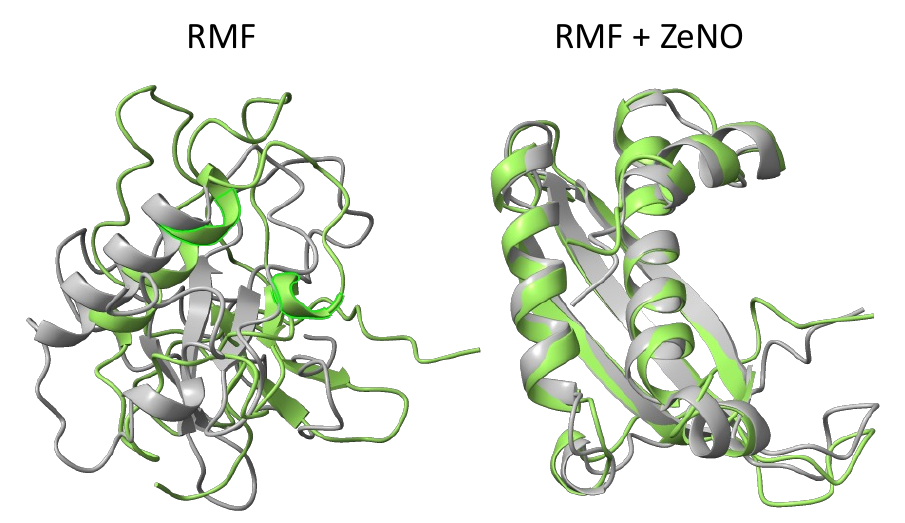}
    \caption{Structural overlay of the RMF-generated protein backbone and the ESMFold-refolded structure.}
    \label{fig:rmf_structure}
    \vspace{-5mm}
\end{wrapfigure}

Most of the practically relevant objectives in protein design are non-differentiable or only available through external black-box pipeline. To demonstrate applicability of ZeNO, we utilize one-step Riemannian MeanFlow (RMF)~\cite{woo2026riemannianmeanflow} protein backbone generator. Following the representation used in frame based protein generative models such as FrameDiff and FrameFlow, RMF represents protein backbones as residue-wise $\mathrm{SE}(3)$ frames and generates structures by updating these frames through a learned flow map. We use scRMSD for a black-box reward, measured by a self-consistency protocol based on inverse folding (ProteinMPNN~\cite{proteinMPNN}) and refolding (ESMFold~\cite{ESMFold}). A backbone is considered designable when scRMSD $<2\text{\AA}$. Further details are provided in Appendix~\ref{subsec:protein_setting}.

Table~\ref{tab:rmf_comparison} and Figure~\ref{fig:rmf_structure} show that applying ZeNO to RMF substantially improves backbone designability. Even for the small RMF/S model, ZeNO leads to a clear increase in the fraction of designable samples, indicating that test time optimization of the initial frame noise can effectively improve sample quality without modifying the generative model. More notably, with the RMF/M model, RMF+ZeNO achieves a designability level comparable to that of RMF with 5 NFEs using only a single flow evaluation. This suggests that ZeNO can partially compensate for reduced sampling steps by exploiting reward guided optimization in the initial noise space without sacrificing diversity, showing that the improvement in designability does not simply come from collapsing to a narrow set of similar structures. Figure~\ref{fig:rmf_compare} further supports this interpretation. As the ZeNO iterations progress, the scRMSD steadily decreases, demonstrating that the control signal consistently guides the initial frames toward more designable protein backbones.

\begin{table}[t]
\centering

% Left: Table, 70%
\begin{minipage}[t]{0.64\textwidth}
\vspace{-3mm}
\centering
\caption{{Comparison of designability and diversity across different NFE regimes. With ZeNO, RMF achieves stronger designability even at a single NFE than  baselines.}}
\resizebox{\linewidth}{!}{
\begin{tabular}{@{}lccccc@{}}
\toprule
\multirow{2}{*}{Model} 
& \multirow{2}{*}{NFE} 
& \multicolumn{2}{c}{Designability} 
& \multicolumn{2}{c}{Diversity} \\
\cmidrule(lr){3-4} \cmidrule(lr){5-6}
& 
& \makecell{$<2\text{\AA}$ \\ $(\uparrow)$}
& \makecell{scRMSD \\ $(\downarrow)$}
& \makecell{Max. \\ Cluster $(\uparrow)$}
& \makecell{Pairwise \\ scTM $(\downarrow)$} \\
\midrule

FrameFlow
& 5 & 0.04 & 6.53 & 0.68 & \textbf{0.22} \\
FrameDiff
& 5 & 0.09 & 6.19 & 0.54 & 0.24 \\
RMF
& 5 & 0.82 & 1.54 & 0.54 & 0.27 \\

\midrule
FrameFlow
& 2 & 0.00 & N/A & N/A & N/A \\
RMF
& 1 & 0.35 & 3.33 & 0.60 & 0.24 \\

\midrule
RMF/S + \textbf{ZeNO}
& 1 & 0.53 & 2.11 & 3.00 & \underline{0.23} \\
RMF/M + \textbf{ZeNO}
& 1 & \textbf{0.90} & \textbf{1.50} & 3.00 & \underline{0.23} \\
\bottomrule
\end{tabular}
}
\vspace{1mm}
% \captionof{table}{Comparison of designability and diversity across different NFE regimes. With ZeNO, RMF achieves stronger designability even at a single NFE than  baselines.}
\label{tab:rmf_comparison}
\end{minipage}
\hfill
% Right: Figure, 30%
\begin{minipage}[t]{0.34\textwidth}
\vspace{0pt}
\centering
\includegraphics[width=\linewidth]{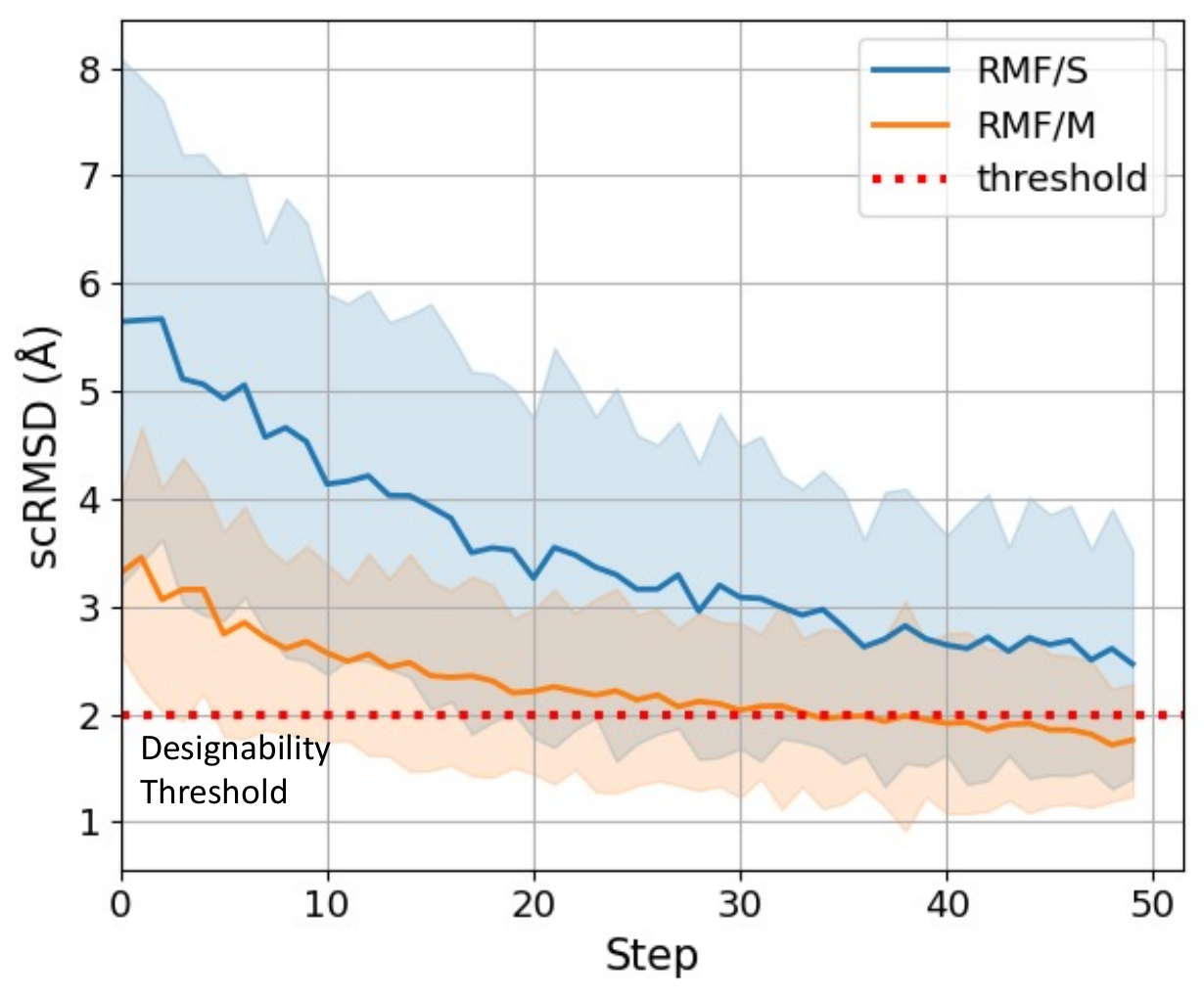}
\vspace{-1mm}
\captionof{figure}{Evolution of designability with ZeNO on RMF-generated protein backbones.}
\label{fig:rmf_compare}
\end{minipage}
\vspace{-6mm}
\end{table}

\vspace{-1mm}
\section{Conclusion}
We proposed ZeNO, a zeroth-order noise optimization framework that aligns deterministic generators with reward functions without requiring gradients, formulating noise updates as a path-integral control problem to yield a Langevin-like update rule targeting the reward-tilted distribution while remaining compatible with pre-trained generators via Ornstein--Uhlenbeck reference dynamics.
Empirically, ZeNO enables effective inference-time scaling and demonstrates strong performance across a range of generators and reward functions, including settings where gradient-based methods are fundamentally inapplicable.
A current limitation is that the zeroth-order estimator requires multiple reward evaluations per update step, which can be computationally demanding when the generator or reward model is expensive to evaluate.
%
% We hope that ZeNO serves as a general foundation for inference-time alignment of deterministic generative models, and that future work explores more sample-efficient control estimation strategies to address this limitation.

% \begin{ack}
% Use unnumbered first level headings for the acknowledgments. All acknowledgments
% go at the end of the paper before the list of references. Moreover, you are required to declare
% funding (financial activities supporting the submitted work) and competing interests (related financial activities outside the submitted work).
% More information about this disclosure can be found at: \url{https://neurips.cc/Conferences/2025/PaperInformation/FundingDisclosure}.

% Do {\bf not} include this section in the anonymized submission, only in the final paper. You can use the \texttt{ack} environment provided in the style file to automatically hide this section in the anonymized submission.
% \end{ack}

\bibliographystyle{plain}
\bibliography{main}

%%%%%%%%%%%%%%%%%%%%%%%%%%%%%%%%%%%%%%%%%%%%%%%%%%%%%%%%%%%%

\newpage
\appendix

\section{Path-Integral Control Formulation~\cite{pathintegral}}
\label{sec:pic}

We briefly derive the optimal control form used in the main text.
Consider the controlled SDE
\begin{equation}
    d\vz = f(\vz,t)dt + g(t)(\vu(\vz,t)dt+d\vw_t),
\end{equation}
with terminal cost
\begin{equation}
    m(\vz_T)=-r(G_\theta(\vz_T)).
\end{equation}
The path-integral control problem seeks the optimal control that minimizes
\begin{equation}
    J(\vu)=\mathbb{E}\left[m(\vz_T)+\frac{1}{2}\int_0^T\|\vu(\vz_t,t)\|^2 dt \right],
\end{equation}
where the quadratic control cost regularizes the deviation from the uncontrolled reference dynamics.

Define the desirability function
\begin{equation}
\Psi(\vz,t)=\mathbb{E}\left[\exp(r(G_\theta(\vz_T))/\lambda)\mid\vz_t=\vz\right],
\end{equation}
and the value function
\begin{equation}
    V(\vz,t)=-\log \Psi(\vz,t).
\end{equation}
By the Feynman-Kac theorem, the value function satisfies the corresponding Hamilton-Jacobi-Bellman equation after the logarithmic transformation. The optimal control is given by the score of the desirability function:
\begin{equation}
    \vu^\star(\vz,t) = -g(t)\nabla_\vz V(\vz,t) = g(t)\nabla_\vz \log \Psi(\vz,t).
\end{equation}
In our implementation, the scalar factor $g(t)$ is absorbed into the update step size. Thus, we use the following normalized control:
\begin{equation}
    \vu^\star(\vz,t) = \nabla_\vz \log \mathbb{E} \left[ \exp(r(G_\theta(\vz_T))/\lambda) \mid \vz_t=\vz \right].
    \label{eq:pic_control_gradient}
\end{equation}

Equivalently, the same control can be written in path-integral form under the uncontrolled reference dynamics. Let $d\vw_t$ denote the Brownian increment at time $t$. Then,
\begin{equation}
\vu^\star(\vz,t):=\frac{\mathbb{E}_{\vz_{t:T}}\left[\exp(r(G_\theta(\vz_T))/\lambda) d\vw_t \right]}{\mathbb{E}_{\vz_{t:T}}\left[\exp(r(G_\theta(\vz_T))/\lambda)\right]}.
    \label{eq:pic_control_exp}
\end{equation}
This expression shows that the optimal control can be estimated using only zeroth-order reward evaluations along uncontrolled trajectories.

\section{Proof of Proposition 1}
\label{sec:proof_prop1}

\linear*
\begin{proof}
    Under the one-step receding-horizon approximation, the optimal control is
    \begin{equation}
        \vu^\star(\vz, t) = \frac{\mathbb{E}[\exp{(r(G_\theta(\vz_{T}))/\lambda)}d\vw_t]}{\mathbb{E}[\exp{(r(G_\theta(\vz_{T}))/\lambda)}]}.
    \end{equation}
    
    Let $r_{T}:=r(G_\theta(\vz_{T}))$ and $\bar r := \mathbb{E}[r_{T}]$. Since $r_{T}=\bar r + (r_{T} - \bar r)$, 
    $$\exp\left(\frac{r_{T}}{\lambda}\right) = \exp\left( \frac{\bar r}{\lambda}\right) \exp\left(\frac{r_{T}-\bar r}{\lambda}\right).$$
    Under the assumption that Var($r_{T})$ is small, $|r_{T} - \bar r|/\lambda$ is small, and we apply the first-order Taylor expansion as $\exp((r_{T}-\bar r)/\lambda) \approx 1 + (r_{T} - \bar r)/\lambda$.
    Then, the denominator of the optimal control is
    \begin{equation}
        \mathbb{E}\left[\exp{ \left(\frac{r_T}{\lambda} \right)}\right] \approx \exp{\left(\frac{\bar r}{\lambda}\right)} \mathbb{E}\left[1+\frac{r_{T}-\bar r}{\lambda}\right] = \exp{\left(\frac{\bar r}{\lambda}\right)},
    \end{equation}
    where the last equality follows from $\mathbb{E}[r_{T}-\bar r] =0$, and the numerator of the optimal control is
    \begin{equation}
        \mathbb{E}\left[\exp{ \left(\frac{r_T}{\lambda}\right) d\vw_t}\right] \approx \exp{\left(\frac{\bar r}{\lambda}\right)} \mathbb{E}\left[d\vw_t+\frac{r_{T}-\bar r}{\lambda}d\vw_t\right] = \frac{1}{\lambda}\exp{\left(\frac{\bar r}{\lambda}\right)} \mathbb{E}[(r_{T}-\bar r)d\vw_t],
        % \mathbb{E}[\exp{(r_{T})}d\vw_t] \approx \exp(\bar r) \mathbb{E}[1+(r_{T}-\bar r)d\vw_t] = \exp(\bar r) \mathbb{E}[(r_{T}-\bar r)d\vw_t],
    \end{equation}
    where the last equality holds as $\mathbb{E}[d\vw_t]=0$.
    By combining them, we get
    \begin{equation}
        \vu^\star(\vz_t, t) \approx \frac{1}{\lambda}\mathbb{E}[(r_{T}-\bar r)d\vw_t].
    \end{equation}
\end{proof}

\section{Proof of Proposition 2}
\label{sec:proof_rem1}

\langevin*
\begin{proof}
    The proposed update with the OU reference process is
    \begin{equation}
        \vz \gets \sqrt{1-\beta}\vz+\sqrt{\beta}\veps+\eta\vu,\qquad \veps\sim\mathcal{N}(0,\rmI).
        \label{eq:ou_update}
    \end{equation}
    When $\beta\ll1$, we use $\sqrt{1-\beta}\approx1-\beta/2$, which gives
    \begin{equation}
        \vz \gets \vz-\frac{\beta}{2}\vz+\sqrt{\beta}\veps+\eta\vu.
    \end{equation}
    Let $\beta=2\lambda\eta$ with $\lambda>0$, then we obtain
    \begin{equation}
        \vz \gets \vz+\eta(\vu-\lambda\vz)+\sqrt{2\lambda\eta}\veps.
        \label{eq:ou_langevin_update}
    \end{equation}
    Equivalently,
    \begin{equation}
        \vz \gets \vz+\lambda\eta\left(-\vz+\frac{\vu}{\lambda}\right)+\sqrt{2\lambda\eta}\veps.
        \label{eq:score_form_update}
    \end{equation}

    We now relate the drift term to the score of a reward-tilted distribution. Since $p(\vz)=\mathcal{N}(0,\rmI)$, we have
    \begin{equation}
        \nabla_\vz\log p(\vz)=-\vz.
    \end{equation}
    Consider the reward-tilted prior
    \begin{equation}
        p^\star(\vz)\propto p(\vz)\exp\left(V\right),
        \label{eq:reward_tilted_prior}
    \end{equation}
    whose score is
    \begin{align}
        \nabla_\vz\log p^\star(\vz) & =-\vz+\nabla_\vz V\\
        &= -\vz+\nabla_\vz \log \mathbb{E}\left[\exp{\left(\frac{r(G_\theta(\vz)}{\lambda}\right)}\right]\\
        &= -\vz + \frac{1}{\lambda} \vu(\vz, t)
        \label{eq:tilted_score}
    \end{align}

    % It remains to connect the PIC control $\vu$ to the reward gradient. From Appendix~\ref{sec:pic}, the one-step receding-horizon control is
    % %
    % \begin{equation}
    %     \vu(\vz,t)=\nabla_\vz\log\mathbb{E}\left[\exp(r(G_\theta(\vz_{t+1})))\mid \vz_t=\vz\right].
    %     \label{eq:one_step_control_again}
    % \end{equation}
    % %
    % In the small-$\beta$ regime and under the OU reference dynamics, 
    % %
    % \begin{equation}
    %     \vz_{t+1}=\sqrt{1-\beta}\vz+\sqrt{\beta}\veps.
    % \end{equation}
    % %
    % the transition distribution of $\vz_{t+1}\mid \vz_t=\vz$ concentrates around $\vz$. Hence,
    % %
    % \begin{equation}
    %     \mathbb{E}\left[\exp(r(G_\theta(\vz_{t+1})))\mid \vz_t=\vz\right]\approx\exp(r(G_\theta(\vz))).
    % \end{equation}
    % %
    % Therefore,
    % %
    % \begin{equation}
    %     \vu(\vz,t)\approx\nabla_\vz\log\exp(r(G_\theta(\vz)))=\nabla_\vz r(G_\theta(\vz)).
    %     \label{eq:control_reward_gradient_approx}
    % \end{equation}
    % %
    % Substituting this approximation into \eqref{eq:score_form_update}, we get
    % %
    % \begin{equation}
    %     -\vz+\frac{\vu}{\lambda}\approx-\vz+\frac{1}{\lambda}\nabla_\vz r(G_\theta(\vz))=\nabla_\vz\log p^\star(\vz).
    % \end{equation}
    %
    Thus, in the small-\(\beta\) regime, the update in \eqref{eq:ou_langevin_update} can be interpreted as
    \begin{equation}
        \vz \gets \vz+\lambda\eta\nabla_\vz\log p^\star(\vz)+\sqrt{2\lambda\eta}\veps,
    \end{equation}
    which is a Langevin-like step targeting the reward-tilted prior $p^\star(\vz)\propto p(\vz)\exp(V(\vz))$.
\end{proof}

\section{Proof of Proposition 3}
\horizon*
\begin{proof}
The discretized OU process gives:
\begin{equation}
    \vz_{t+1} = \sqrt{1-\beta}\,\vz_t + \sqrt{\beta}\,\veps_t.
\end{equation}
Applying this recursively for $H$ steps:
\begin{align}
    \vz_{t+H} &= \sqrt{1-\beta}\,\vz_{t+H-1} + \sqrt{\beta}\,\veps_{t+H-1} \\
    &= (1-\beta)\vz_{t+H-2} + \sqrt{\beta}(1-\beta)^{1/2}\veps_{t+H-2} + \sqrt{\beta}\,\veps_{t+H-1} \\
    &= \cdots \nonumber\\
    &= (1-\beta)^{(H-1)/2}\vz_{t+1} + \sqrt{\beta}\sum_{k=0}^{H-2}(1-\beta)^{k/2}\veps_{t+H-1-k} \\
    &= (1-\beta)^{(H-1)/2}\left(\sqrt{1-\beta}\,\vz_t + \sqrt{\beta}\,\veps_t\right) + \sqrt{\beta} \sum_{k=0}^{H-2}(1-\beta)^{k/2}\veps_{t+H-1-k}
\end{align}
The coefficient of $\veps_t$ is $\sqrt{\beta}(1-\beta)^{(H-1)/2}$, giving:
\begin{equation}
    \frac{\partial \vz_{t+H}}{\partial \veps_t} = \sqrt{\beta}(1-\beta)^{(H-1)/2}.
\end{equation}
When $\beta \ll 1$, applying $\log(1-\beta) \approx -\beta$:
\begin{equation}
    \frac{\partial \vz_{t+H}}{\partial \veps_t} \approx \sqrt{\beta}\,e^{-\beta(H-1)/2},
\end{equation}
which decreases exponentially with horizon $H$, motivating the one-step receding-horizon approximation ($H=1$).
\end{proof}

\section{Additional Experiments}
\label{sec:additional_exp}

\subsection{Ablation Study}
\label{sec:beta_ablation} 

\paragraph{$\beta$ ablation}

We conduct an ablation study on the SDE variance parameter $\beta$ with SDXL-Turbo. As shown in Section~\ref{sec:proof_rem1}, the Langevin-like sampling interpretation targeting the tilted prior distribution holds in the small-variance regime, i.e., when $\beta \ll 1$.

\begin{table}[H]
\centering
\caption{Ablation study on the effect of $\beta$ on aesthetic score.}
\label{tab:vp_beta_ablation}
\resizebox{0.35\textwidth}{!}{
\begin{tabular}{cc}
\toprule
$\beta$ & Aesthetic Score $(\uparrow)$ \\
\midrule
0.01 & \textbf{6.000} \\
0.05 & 4.585 \\
0.10 & 4.582 \\
0.30 & 4.586 \\
\bottomrule
\end{tabular}
}
\end{table}

\subsection{Diversity Analysis}
\label{app:diversity_analysis}

ZeNO is designed to improve reward while preserving the diversity of the reference generator. 
From the Langevin interpretation in Proposition~\ref{prop:langevin}, the controlled noise dynamics does not aim to collapse all samples to a single reward-maximizing point. Instead, it implicitly samples from a reward-tilted distribution,
\begin{equation}
    p^\star(z) \propto p(z)\exp(V(z)),
\end{equation}
where the reference prior $p(z)$ preserves the support and diversity of the original generator, while the exponential value tilt increases the probability of high-reward regions. Therefore, although the generated reward distribution is expected to shift toward higher reward samples, the resulting sample distribution should still retain diversity as long as the tilt is not overly sharp. This behavior is consistent with our toy experiment in Fig~\ref{fig:toy}, where ZeNO closely matches the target tilted distribution, while gradient-based updates concentrate probability mass on a subset of modes.

To empirically verify this behavior in high-dimensional image generation, we measure diversity throughout ZeNO optimization using the Vendi Score~\cite{friedman2023vendi} by following ~\cite{kim2025diverse}. Given a set of generated images $\{x_i\}_{i=1}^n$, we compute image embeddings and form a normalized pairwise similarity matrix $K$. The Vendi Score is defined as the exponential entropy of the eigenvalue distribution of $K$,
\begin{equation}
    \mathrm{VS}(\{x_i\}_{i=1}^n)
    =
    \exp\left(
    -\sum_j \lambda_j \log \lambda_j
    \right),
\end{equation}
where $\{\lambda_j\}$ are the normalized eigenvalues of $K$. A higher Vendi Score indicates a larger effective number of distinct samples, while a decreasing score indicates reduced diversity.

Figure~\ref{fig:vendi_analysis} shows the Vendi Score over ZeNO iterations. We observe a small decrease at the early stage of optimization, which is expected because the reward tilt shifts the initial distribution toward higher-reward regions. Importantly, however, the diversity does not continue to decrease. Instead, the Vendi Score quickly saturates and remains stable over the rest of the optimization trajectory. This suggests that ZeNO improves reward by redistributing probability mass according to the tilted distribution rather than collapsing the generator output to a narrow set of samples. In other words, ZeNO preserves meaningful sample diversity while steering the noise distribution toward higher-reward regions.

\begin{figure}[t]
    \centering

    \begin{minipage}[t]{0.43\linewidth}
        \centering
        \includegraphics[width=\linewidth]{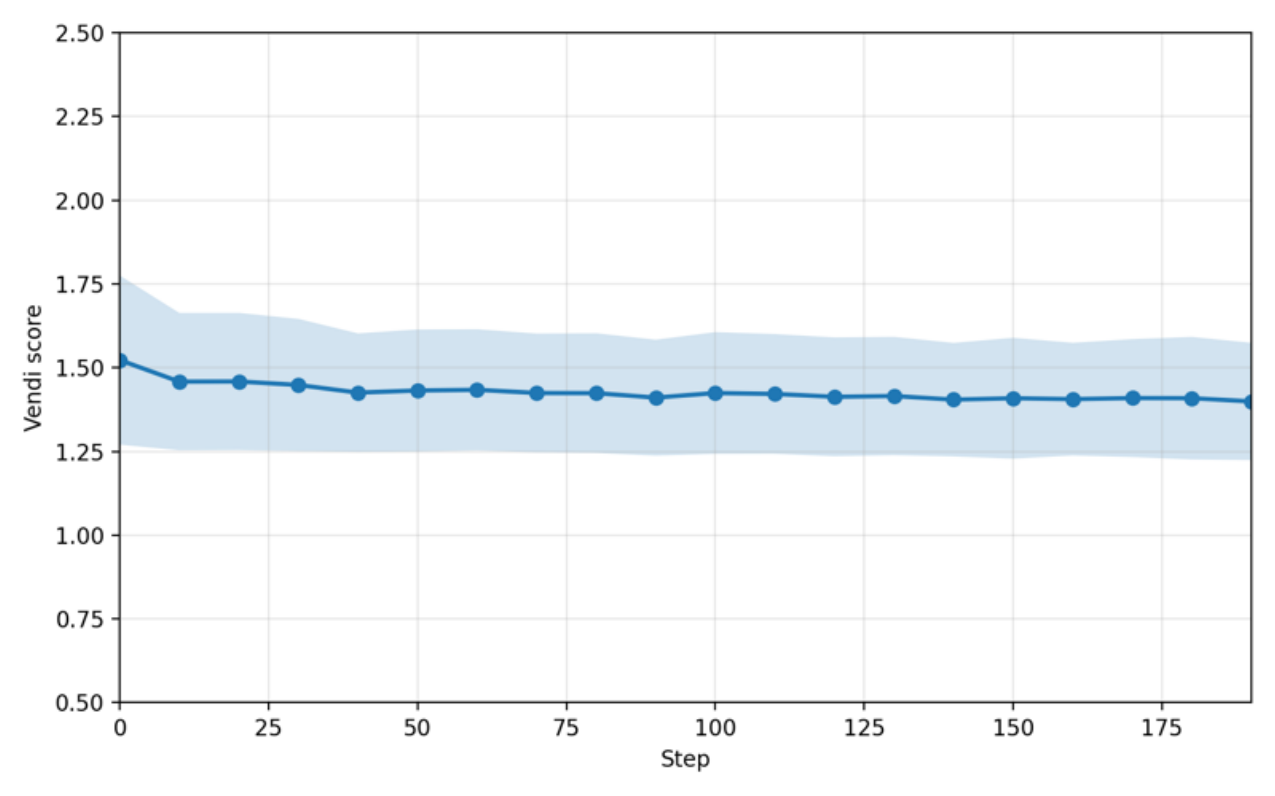}
        \caption*{(a) Vendi Score}
    \end{minipage}
    \hfill
    \begin{minipage}[t]{0.53\linewidth}
        \centering
        \includegraphics[width=\linewidth]{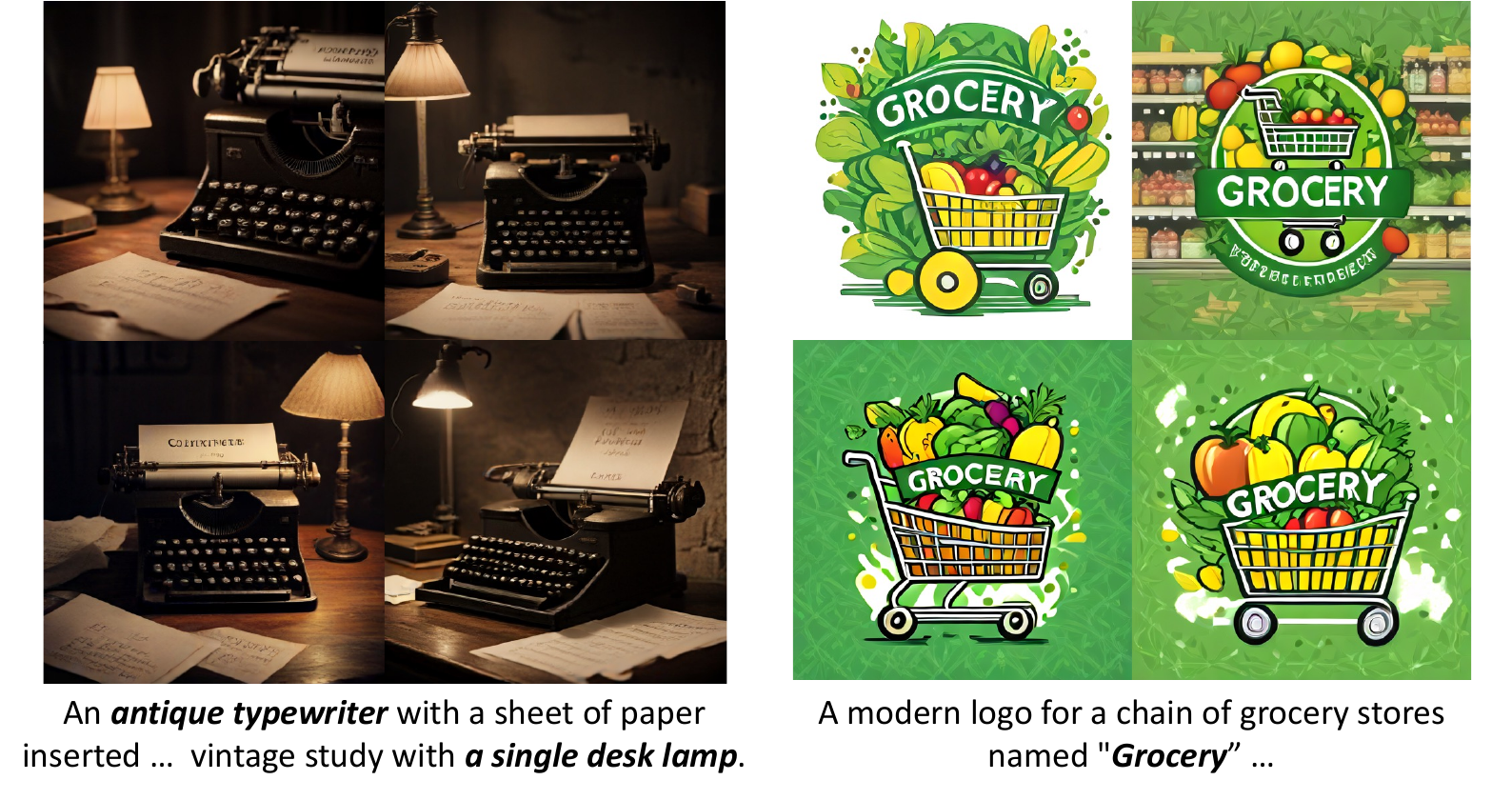}
        \caption*{(b) Generated examples}
    \end{minipage}

    \caption{
    Diversity analysis of ZeNO.
    Left: Vendi Score over ZeNO iterations. The score slightly decreases during the initial reward-guided optimization phase, but quickly saturates and remains stable across subsequent iterations.
    Right: representative generated samples during optimization.
    These results indicate that ZeNO preserves sample diversity and does not collapse to a narrow set of modes while shifting the generation distribution toward higher-reward regions.
    Shaded regions indicate variability across samples.
    }
    \label{fig:vendi_analysis}
\end{figure}
\section{Implementation Details}
\label{sec:detail}

All experiments are conducted using NVIDIA RTX 3090 and RTX 4090 GPUs with 24GB of memory.

\subsection{Image Generation}
\label{subsec:image_setting}

\paragraph{Figure Cations}
The images in Figure~\ref{fig:main} were generated using SDXL-Turbo, 
and the corresponding text prompts are listed below, starting from the upper-left and proceeding counter-clockwise.
\begin{enumerate}
    \item "A teddy bear wearing holding a whiteboard, "ZeNO" written on it."
    \item "A beautifully decorated birthday cake with smooth blue icing, the letters "Happy 30th Jake" elegantly spelled out on top, surrounded by colorful candles and sparkling decorations, set on a white tablecloth in a cozy living room."
    \item "Studio shot of intricate shoe sculptures crafted from vibrant colored wires, with the text "rte" prominently displayed on a clean, white background."
    \item "A campfire in a dense forest, with smoke curling upwards and forming the words "Send Help" against a twilight sky, partially obscured by the treetops."
    \item "An astronaut proudly displays the "Mission Success" patch on their spacesuit, standing against the backdrop of a stunning Earthrise over the lunar horizon, with the vast cosmos stretching beyond."
\end{enumerate}

\paragraph{CFG scale}
We used a CFG scale of 0.0 for SDXL-Turbo and DMD2, and 8.0 for LCM, following the default recommended settings for each model.

\subsection{Protein Structure Generation}
\label{subsec:protein_setting}

\paragraph{Initial-frame optimization on $\mathrm{SE}(3)^N$.}
For protein structure generation, we apply ZeNO to a one-step Riemannian Flow Matching (RMF) backbone generator. Following frame-based protein generative models, each protein backbone of length $N$ is represented as a collection of residue-wise rigid frames,
\begin{equation}
    x_0 = \{(R_i, t_i)\}_{i=1}^{N} \in \mathrm{SE}(3)^N,
\end{equation}
where $R_i \in \mathrm{SO}(3)$ is the residue orientation and $t_i \in \mathbb{R}^3$ is the residue translation. The RMF generator maps an initial frame sample $x_0$ to a generated backbone through a learned flow map,
\begin{equation}
    x_1 = \Phi_\theta(x_0),
\end{equation}
where $\Phi_\theta$ is kept frozen throughout ZeNO optimization. 

At each ZeNO iteration $k$, we sample $K$ perturbations around the current initial frame $x_k=\{(R_i^k,t_i^k)\}_{i=1}^{N}$. For particle $j$, translation perturbations are sampled as centered Gaussian noise,
\begin{equation}
    v_i^{(j)} \sim \mathcal{N}(0,I_3), 
    \qquad
    \sum_{i=1}^{N} v_i^{(j)} = 0,
\end{equation}
and rotation perturbations are sampled in the Lie algebra $\mathfrak{so}(3)\simeq\mathbb{R}^3$,
\begin{equation}
    \omega_i^{(j)} \sim \mathcal{N}(0,I_3).
\end{equation}
The perturbed initial frames are then constructed as
\begin{align}
    \tilde t_i^{(j)} 
    &= t_i^k + \sigma_t v_i^{(j)}, \\
    \tilde R_i^{(j)}
    &= \exp\!\left(\sigma_R \widehat{\omega_i^{(j)}}\right) R_i^k,
\end{align}
where $\widehat{\omega}$ denotes the skew-symmetric matrix corresponding to $\omega$, and $\exp(\cdot)$ is the matrix exponential on $\mathrm{SO}(3)$. In our implementation, we use a shared perturbation scale $\sigma$ for both components unless otherwise specified.

Each perturbed initial frame $\tilde x^{(j)}=\{(\tilde R_i^{(j)},\tilde t_i^{(j)})\}_{i=1}^{N}$ is passed through the frozen RMF generator to obtain a backbone,
\begin{equation}
    \tilde y^{(j)} = \Phi_\theta(\tilde x^{(j)}),
\end{equation}
which is then scored by a black-box reward function $r_j=R_{\mathrm{bb}}(\tilde y^{(j)})$. We compute an advantage by subtracting a particle-level baseline,
\begin{equation}
    a_j = r_j - b, 
    \qquad
    b = \frac{1}{K}\sum_{j=1}^{K} r_j,
\end{equation}
or alternatively the particle median. The optimal control is computed by aggregating the perturbations weighted by their advantages:
\begin{align}
    u_t 
    &= \frac{1}{\sigma K}\sum_{j=1}^{K} a_j v^{(j)}, \\
    u_R 
    &= \frac{1}{\sigma K}\sum_{j=1}^{K} a_j \omega^{(j)}.
\end{align}
The factor $1/\sigma$ corresponds to the standard zeroth-order gradient estimator for a locally smoothed reward objective.

The initial frame is then updated as
\begin{align}
    t_i^{k+1}
    &= t_i^k + \eta \, u_{t,i}, \\
    R_i^{k+1}
    &= \exp\!\left(\eta \widehat{u_{R,i}}\right) R_i^k,
\end{align}
where $\eta$ is the ZeNO guidance step size. After each translation update, we re-center the translations to remove global drift:
\begin{equation}
    t_i^{k+1}
    \leftarrow
    t_i^{k+1}
    -
    \frac{1}{N}\sum_{\ell=1}^{N} t_\ell^{k+1}.
\end{equation}
For stability, we clip the per-residue translation and rotation update magnitudes before applying the update:
\begin{align}
    \eta u_{t,i} 
    &\leftarrow 
    \mathrm{clip}\left(\eta u_{t,i}, \tau_t\right), \\
    \eta u_{R,i}
    &\leftarrow 
    \mathrm{clip}\left(\eta u_{R,i}, \tau_R\right),
\end{align}
where $\tau_t$ and $\tau_R$ denote the maximum allowed translation and rotation tangent update norms, respectively. This preserves valid $\mathrm{SO}(3)$ rotations while preventing the initial frames from drifting too far from the RMF source distribution.

A SDE defined on a Euclidean space is a natural choice for image latents, but does not immediately transfer to rotation matrices, since $\mathrm{SO}(3)$ is not closed under addition.
This does not preclude reference dynamics directly on protein frames; one could define a manifold-aware process by injecting noise in the local tangent space and retracting the result back to $\mathrm{SO}(3)$.
In this work, we adopt a VP-SDE for image generation tasks as it provides useful theoretical properties and insights, but this choice does not restrict the framework to a specific type of reference dynamics.
For the protein structure generation task, we therefore adopt a VE-SDE.
Translations are perturbed by centered Gaussian noise, while rotations are perturbed in $\mathrm{SO}(3)$ and mapped back through the exponential map. Since ZeNO only requires reward evaluations under local perturbations, both VP and VE style manifold perturbations are compatible with our framework. We use the VE inspired variant as a simple and stable default for the protein designability experiments.

\paragraph{Designability reward.}
We use a black-box designability reward based on an inverse-folding and refolding self-consistency protocol. Given a generated backbone $y=\Phi_\theta(x)$, we first use ProteinMPNN to design $M$ amino-acid sequences conditioned on $y$:
\begin{equation}
    s_m \sim p_{\mathrm{MPNN}}(s \mid y),
    \qquad
    m=1,\dots,M.
\end{equation}
Each designed sequence is then refolded using ESMFold:
\begin{equation}
    \hat y_m = \mathrm{ESMFold}(s_m).
\end{equation}
We measure the self-consistency between the generated backbone $y$ and the refolded structure $\hat y_m$ using the self-consistency RMSD,
\begin{equation}
    d_m = \mathrm{scRMSD}(y,\hat y_m).
\end{equation}
Following standard designability evaluation, we summarize the backbone-level self-consistency by the best refolded sequence,
\begin{equation}
    d_{\mathrm{sc}}(y)
    =
    \min_{m=1,\dots,M}
    \mathrm{scRMSD}(y,\hat y_m).
\end{equation}
A generated backbone is considered designable if
\begin{equation}
    d_{\mathrm{sc}}(y) < 2\text{\AA}.
\end{equation}
For reward-guided optimization, we convert this self-consistency score into a scalar reward. In the simplest form, we use the negative self-consistency RMSD,
\begin{equation}
    R_{\mathrm{design}}(y)
    =
    - d_{\mathrm{sc}}(y),
\end{equation}
so that higher reward corresponds to lower refolding error and thus higher designability.
This reward is non-differentiable with respect to the RMF initial frames because it involves ProteinMPNN sequence sampling, ESMFold refolding, structural alignment, and RMSD computation. ZeNO only requires reward evaluations and therefore can directly optimize this designability objective without backpropagating through ProteinMPNN, ESMFold, or the self-consistency metric.

\section{Additional Results}
\label{sec:additional_results}

\subsection{Image Results}
\label{sec:additional_image_qual_results}

\begin{figure}[H]
    \centering
    \includegraphics[width=\linewidth]{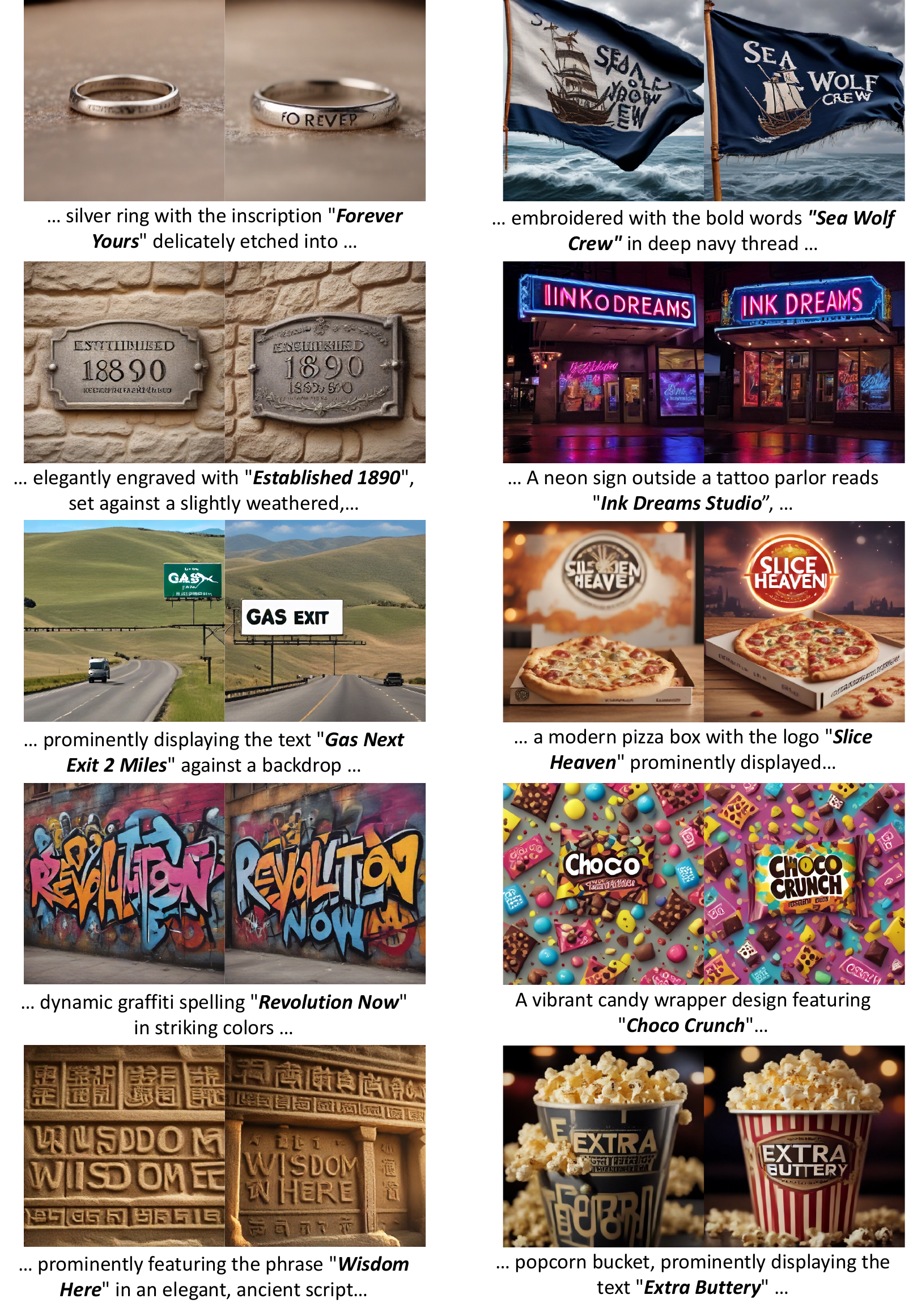}
    \caption{Images generated by 1 step SDXL-Turbo with ZeNO, using PickScore as the reward signal.}
    \label{fig:additional_turbo_pickscore}
\end{figure}

\begin{figure}[H]
    \centering
    \includegraphics[width=\linewidth]{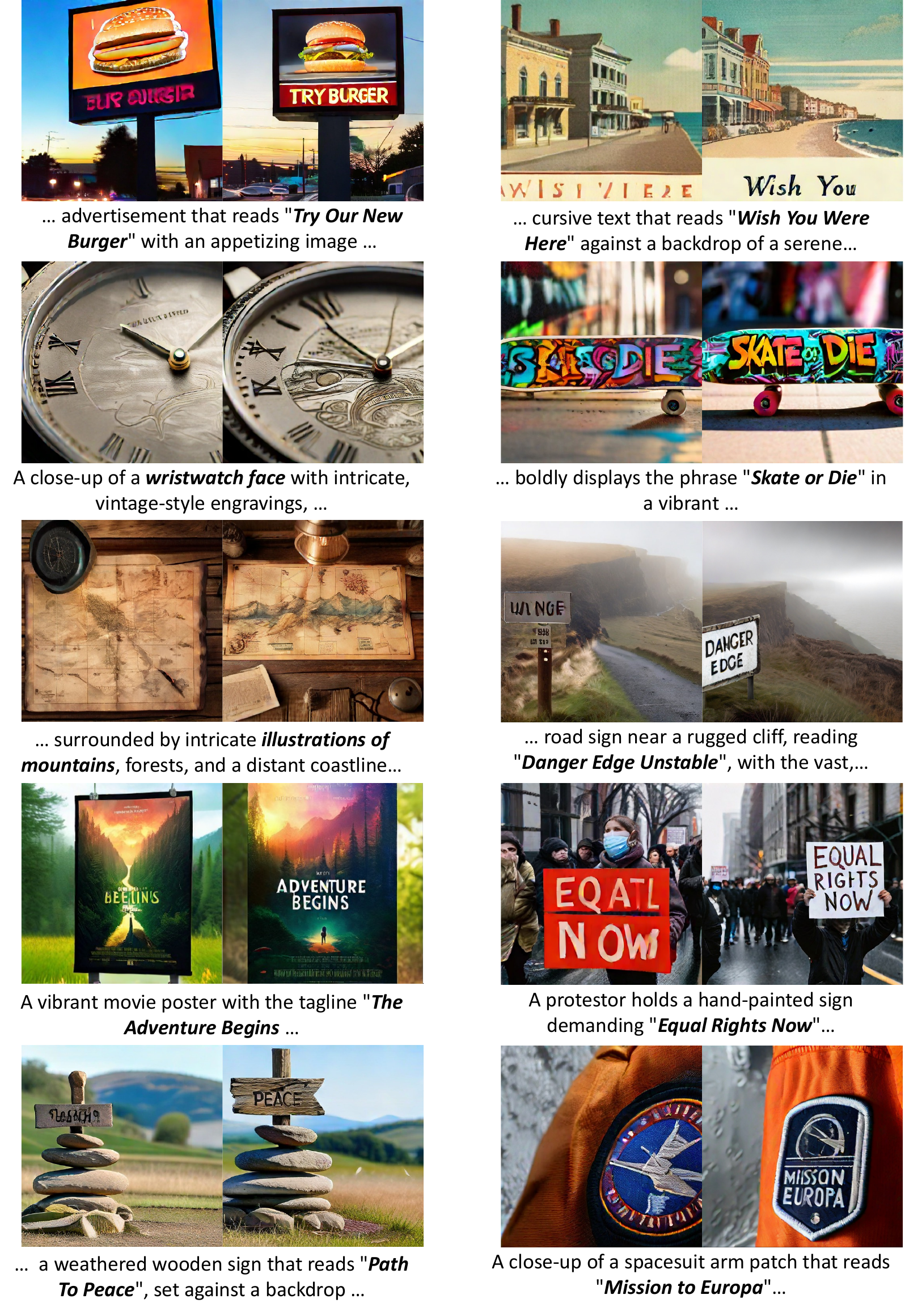}
    \caption{Images generated by 1 step SDXL-DMD2 with ZeNO, using PickScore as the reward signal.}
    \label{fig:additional_dmd2_pickscore}
\end{figure}

\begin{figure}[H]
    \centering
    \includegraphics[width=\linewidth]{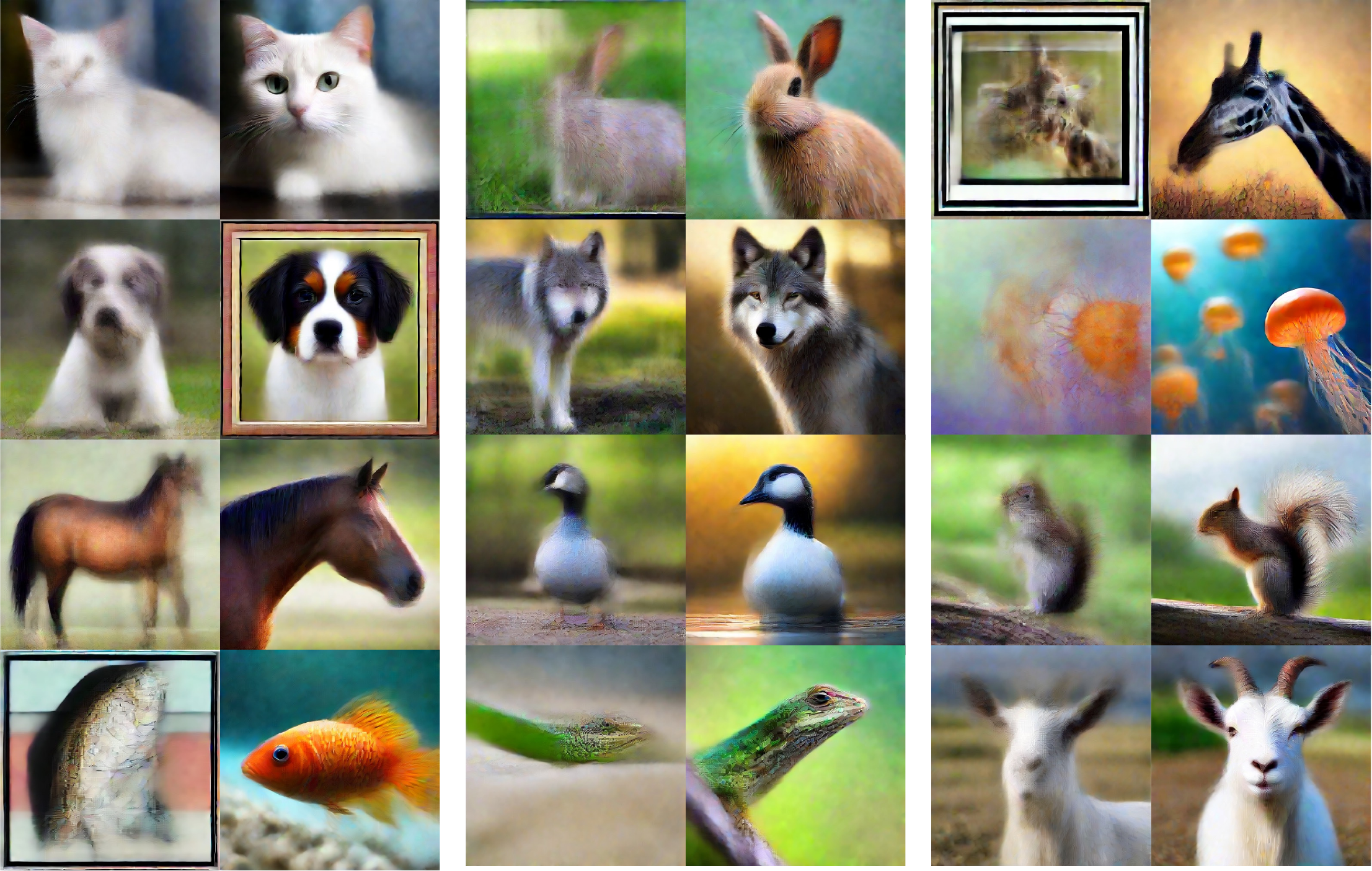}
    \caption{Images generated by 1 step LCM with ZeNO, using aesthetic score as the reward signal.}
    \label{fig:additional_lcm_aesthetic}
\end{figure}

\begin{figure}[H]
    % \vspace{-1cm}
    \centering
    \includegraphics[width=\linewidth]{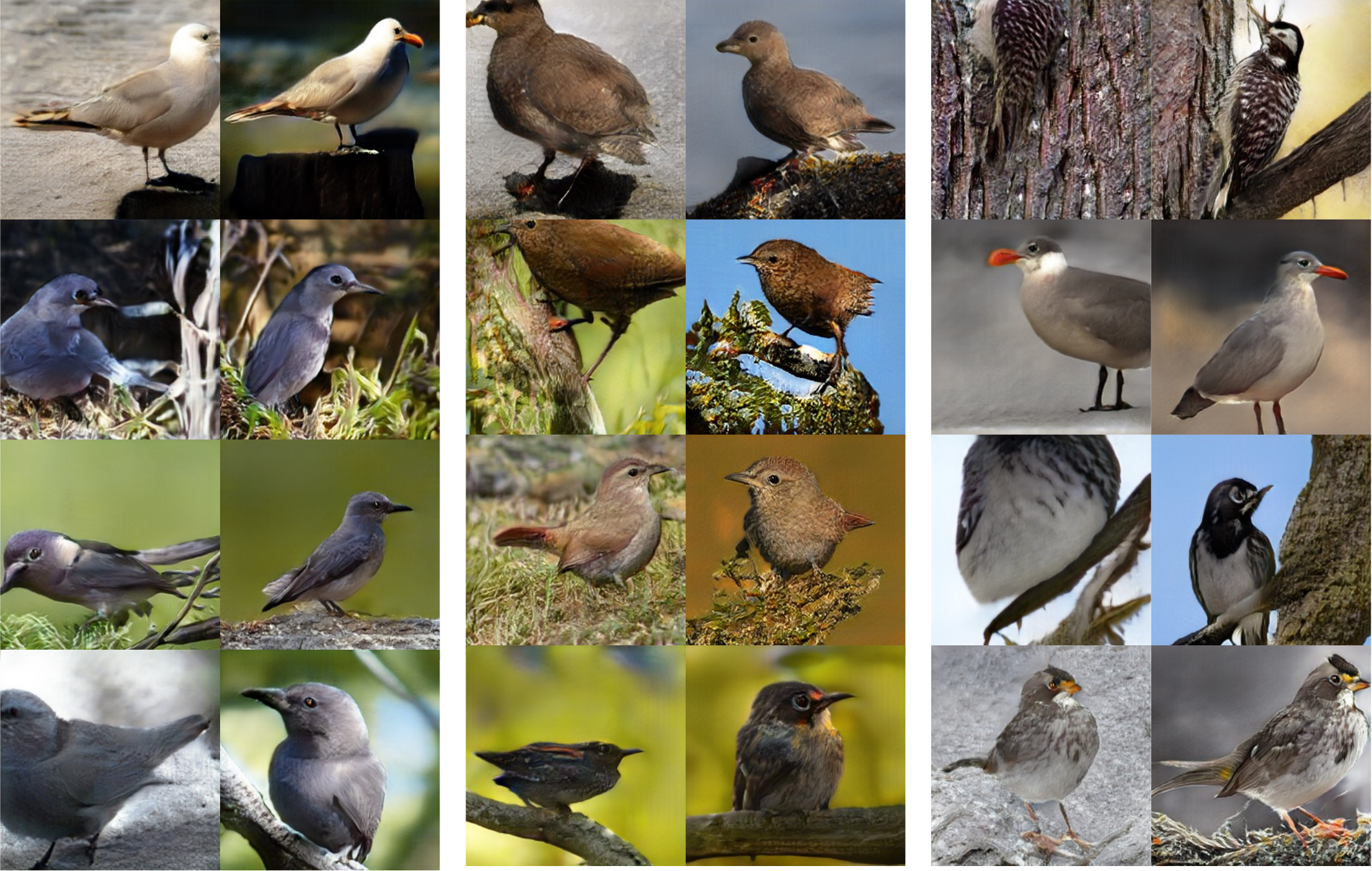}
    \caption{Images generated by DF-GAN with ZeNO, using aesthetic score as the reward signal.}
    \label{fig:additional_dfgan_aesthetic}
\end{figure}

\newpage
\subsection{Protein Results}
\label{sec:additional_protein_qual_results}
We provide additional uncurated qualitative results for 30 protein samples, showing that the per-sample scRMSD decreases over ZeNO iterations.

\begin{figure}[H]
    \centering
    \includegraphics[width=0.90\linewidth]{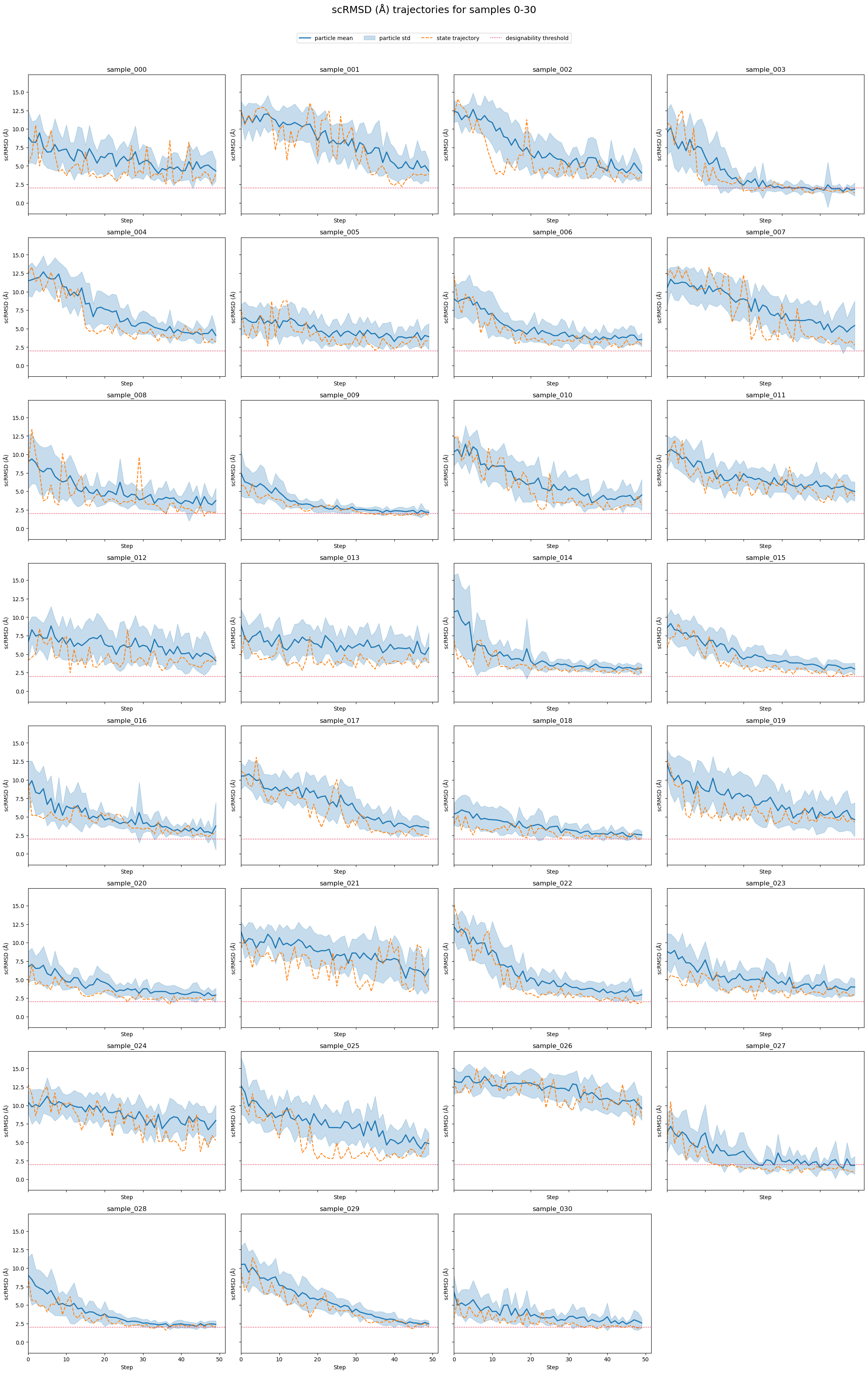}
    \caption{Step-wise scRMSD trajectories for each sample generated by the RMF/S model.}
    \label{fig:rmf_s_per_sample}
\end{figure}

\begin{figure}[H]
    \centering
    \includegraphics[width=0.95\linewidth]{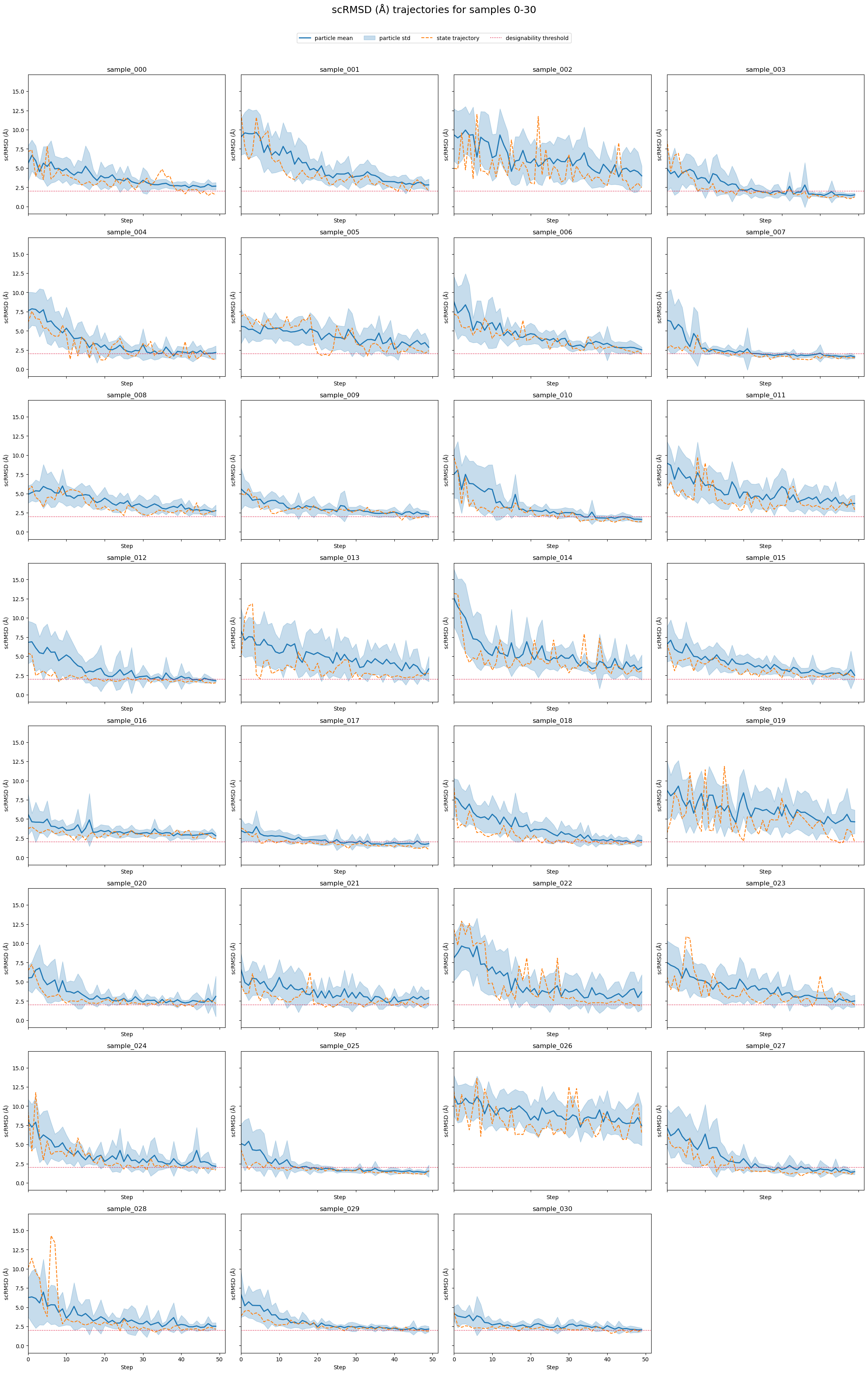}
    \caption{Step-wise scRMSD trajectories for each sample generated by the RMF/M model.}
    \label{fig:rmf_m_per_sample}
\end{figure}

%%%%%%%%%%%%%%%%%%%%%%%%%%%%%%%%%%%%%%%%%%%%%%%%%%%%%%%%%%%%

% \newpage
% \input{checklist.tex}

\end{document}